\title[Do PAC-Learners Learn the Marginal Distribution?]{Do PAC-Learners Learn the Marginal Distribution?}
\newcommand{\R}{\mathbb{R}}
\Crefname{lemma}{Lemma}{Lemmas}
\Crefname{proposition}{Proposition}{Proposition}
\Crefname{definition}{Definition}{Definitions}
\begin{document}

\maketitle

\begin{abstract}%
The Fundamental Theorem of PAC Learning asserts that learnability of a concept class $H$ is equivalent to the \emph{uniform convergence} of empirical error in $H$ to its mean, or equivalently, to the problem of \emph{density estimation}, learnability of the underlying marginal distribution with respect to events in $H$. This seminal equivalence relies strongly on PAC learning's `distribution-free' assumption, that the adversary may choose any marginal distribution over data. Unfortunately, the distribution-free model is known to be overly adversarial in practice, failing to predict the success of modern machine learning algorithms, but without the Fundamental Theorem our theoretical understanding of learning under distributional constraints remains highly limited.

In this work, we revisit the connection between PAC learning, uniform convergence, and density estimation beyond the distribution-free setting when the adversary is restricted to choosing a marginal distribution from a known family $\mathscr{P}$. We prove that while the traditional Fundamental Theorem fails, a finer-grained connection between the three fundamental notions continues to hold:
\begin{enumerate}
    \item PAC-Learning is strictly sandwiched between two relaxed models of density estimation, differing only in whether the learner knows the set of well-estimated events in $\mathcal{H}$.
    \item Under reasonable assumptions on $H$ and $\mathscr{P}$, density estimation is equivalent to \emph{uniform estimation}, a weakening of uniform convergence allowing non-empirical estimators.
\end{enumerate}
Together, our results give a clearer picture of how the Fundamental Theorem extends beyond the distribution-free setting and shed new light on the classically challenging problem of learning under arbitrary distributional assumptions.
\end{abstract}

\begin{keywords}%
  PAC-Learning, Density Estimation, Uniform Convergence
\end{keywords}

\section{Introduction}
\cite{valiant1984theory} and \cite{vapnik1974theory} Probably Approximately Correct (PAC)-Learning is the foundation of modern learning theory. Traditionally, the model captures learnability of a hypothesis class $(X, H)$, a set $X$ of `raw data' and a family of binary classifiers $H$, against a \textit{worst-case} or `\textit{distribution-free}' adversary, meaning algorithmic success is always measured against the worst possible marginal distribution over data. This setting gives rise to an elegant theory of learning centered around the `Fundamental Theorem of PAC Learning', stating the complexity of learning $H$ is controlled by \emph{uniform convergence}, the extent to which the empirical mean of a large enough sample uniformly estimates the error of every $h \in H$ \citep{vapnik1974theory, Blumer, haussler1992decision}.

Unfortunately, in practice, distribution-free PAC-learning is overly adversarial. Traditional lower bounds rely on contrived data distributions not seen in practice, and modern architectures like deep nets learn much faster than PAC theory predicts \citep{neyshabur2014search,zhang2021understanding, nagarajan2019}. At a theoretical level, this raises a natural question:
\begin{center}
\textit{Can we characterize learnability under a \textbf{restricted} adversary?}
\end{center}
We study a foundational variant of this question called the \textit{distribution-family model}, which restricts the adversary to choosing data from a known family $\mathscr{P}$ of marginal distributions. Characterizing learnability in this setting is an old and challenging problem \citep{vapnik1974theory,vapnik1971,benedek1991learnability,natarajan1992probably,dudley1994metric}, dating back to work of \cite{benedek1991learnability} observing the failure of uniform convergence in the setting. Since that time, progress on the distribution family model has been largely negative, with \cite{dudley1994metric} falsifying a conjectured characterization of \cite{benedek1991learnability} based on metric entropy, and recent work of \cite{lechner2023impossibility} showing the model has no scale-insensitive combinatorial characterization.


In this work, we revisit the conventional knowledge that the Fundamental Theorem fails beyond the distribution-free setting and aim to offer a more nuanced view of the connection between uniform convergence and its variants and learning under distributional assumptions. Toward this end, it is constructive to consider a rephrasing of the Fundamental Theorem in terms of a standard statistical framework: \textit{density estimation}. Given a triple $(\mathscr{P},X,H)$ and oracle access to a distribution $D \in \mathscr{P}$, the density estimation problem asks the learner to output $D' \in \mathscr{P}$ that is close to $D$ in the total variation metric with respect to $H$:\footnote{Here $H \Delta H \coloneqq \{ h \Delta h': h,h' \in H\}$ denotes the set of symmetric differences in $H$.}
\[
TV_{H \Delta H}(D,D') \coloneqq \max_{S \in H \Delta H}(|D(S) - D'(S)|).
\]
This is natural in the context of supervised learning where events in $H \Delta H$ measure the error between hypotheses, and is clearly closely related to uniform convergence. Indeed, the following folklore rephrasing \citep{yatracos1985rates,devroye2001combinatorial,chan2014efficient,ashtiani2018sample} of the Fundamental Theorem states that density estimation, uniform convergence, and PAC learning are all equivalent in the distribution-free model.
\begin{theorem}[The Fundamental Theorem of PAC Learning (Informal)]
In the distribution-free setting:
\[
\text{Density Estimation = PAC-Learning = Uniform Convergence}.
\]
\end{theorem}
While \cite{benedek1991learnability} observe the latter equivalence breaks in the distribution-family model, their example does not contradict a potential equivalence with density estimation. Indeed, density estimation is a common strategy in learning with distributional assumptions (see e.g.\ \cite{blanc2023lifting}). We will work with the following weakening (equivalent in the distribution free setting), which is clearly sufficient to learn so long as each $(D,X,H)$ in $(\mathscr{P},X,H)$ is individually learnable.\footnote{Formally here we mean the $(D,X,H)$ are learnable in a uniformly bounded number of samples, which \cite{benedek1991learnability} proved is equivalent to $(\mathscr P, X, H)$ having uniformly bounded metric entropy (see Definition \ref{def:UBME}). If this does not hold $(\mathscr{P},X,H)$ is trivially unlearnable, and the problem becomes uninteresting. Throughout this work, we will always assume this condition.}
\begin{definition}[Intermediate Density Estimation (Informal \Cref{def:STV})]
$(\mathscr P, X, H)$ has an intermediate density estimator (IDE) if there exist a learner and (sample dependent) subset $G \subset H$ satisfying:
\begin{enumerate}
    \item The learner accurately estimates $d_D(g,g')$ for all $g,g' \in G$
    \item Every fixed $h \in H$ lies in $G$ with high probability
    \item The learner knows $G$.
\end{enumerate}
\end{definition}
In other words, while standard density estimation requires outputting a good estimate on the measure of all elements in $H \Delta H$ simultaneously, intermediate estimation only requires this for $G \Delta G$, where $G$ is known and `typically' contains any fixed hypothesis in the class.

On its surface, necessity and sufficiency of intermediate estimation seems reasonable. It holds in the distribution-free setting, and it is clear that any PAC learner must infer \textit{some} information about the measure of $H \Delta H$, or it would not be able to distinguish between good and bad hypotheses. Unfortunately, it turns out such a direct extension fails: even the relaxed intermediate density estimation is not necessary for PAC learning with arbitrary distribution families.
\begin{proposition}[Intermediate Estimation Fails Necessity (Informal \Cref{thm:TV-not-necessary})]\label{intro:thm-stv}
There exists a PAC-learnable class $(\mathscr{P},X,H)$ with no intermediate density estimator.
\end{proposition}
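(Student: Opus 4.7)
The plan is to construct an explicit triple $(\mathscr{P}, X, H)$ that is PAC-learnable in a uniformly bounded number of samples yet admits no STV-learner. The intended construction takes $X$ to be a countable set and $\mathscr{P}$ to be a family parameterized by a hidden combinatorial object (e.g.\ a subset $A \subseteq \mathbb{N}$ drawn from some carefully chosen collection), with each $D_A$ designed so that its coarse features can be exploited from a small \emph{labeled} sample while the fine-grained information needed to pin down $d_{D_A}(h,h')$ for a non-trivial portion of pairs $(h,h')$ is unrecoverable from any bounded \emph{unlabeled} sample. The hypothesis class $H$ is then chosen so that $H \Delta H$ contains many events whose $D_A$-measure is sensitive to the hidden parameter $A$, tying distance estimation directly to recovery of $A$.

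First, I would establish PAC-learnability by giving an explicit algorithm that avoids estimating the marginal $D_A$ outright. After drawing a labeled sample of size $m(\epsilon,\delta)$, the algorithm either (i) uses the labels to identify a good hypothesis---exploiting the fact that the pattern of positive examples encodes enough of $A$ to locate the target within an $\epsilon$-ball---or (ii) in the ``rare positive'' regime concludes that the target lies in a subfamily of $H$ whose members are pairwise $\epsilon$-close under $D_A$, so that outputting any one is safe. This is consistent with the distribution-family model, where by assumption each individual $(D,X,H)$ is learnable under UBME but the family as a whole may fail uniform convergence.

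Next, I would rule out STV via an indistinguishability argument. Fix any hypothetical STV-learner using $m$ unlabeled samples. Using the parameterization of $\mathscr{P}$, I would exhibit two distributions $D_A, D_{A'} \in \mathscr{P}$ whose product measures $D_A^{\otimes m}$ and $D_{A'}^{\otimes m}$ are $o(1)$-close in total variation, so the learner's output---both the certified set of well-estimated pairs and the numerical distance estimates---has nearly identical distributions in the two worlds. At the same time, I would arrange that a macroscopic (under whatever ``most'' measure appears in \Cref{def:STV}) portion of pairs $(h,h') \in H \times H$ satisfy $|d_{D_A}(h,h') - d_{D_{A'}}(h,h')| > 2\epsilon$. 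Then no single certified set with a single slate of numerical estimates can be $\epsilon$-accurate on both, so the supposed STV guarantee must fail on at least one of $D_A, D_{A'}$.

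The main obstacle is engineering these two requirements into a single class: the labels must be informative enough to drive PAC-learnability, yet the unlabeled marginal must be uninformative enough to leave the hidden $A$ (and with it the distance structure on most of $H \times H$) genuinely ambiguous. The construction must therefore route the useful PAC information through the label channel in a way that is effectively orthogonal to the information needed for unsupervised distance estimation---for example, by encoding $A$ implicitly through the positively labeled examples while making the marginal nearly $A$-independent on a large portion of its support. Checking that this orthogonality can coexist with the background UBME hypothesis (which must hold for each individual $D \in \mathscr{P}$ for the problem to be non-trivial), and that the indistinguishable pair $D_A, D_{A'}$ really does move a large-measure set of $H\Delta H$-events, is where the bulk of the technical care will go.
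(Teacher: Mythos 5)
Your lower-bound strategy has a fatal flaw, and it is worth seeing exactly where it breaks because the failure mode points at the real source of hardness. You propose to exhibit $D_A, D_{A'} \in \mathscr{P}$ with $TV(D_A^{\otimes m}, D_{A'}^{\otimes m}) = o(1)$ while a macroscopic set of pairs $(h,h')$ satisfies $|d_{D_A}(h,h') - d_{D_{A'}}(h,h')| > 2\varepsilon$. But $d_D(h,h')$ is nothing other than $D(h \Delta h')$, the $D$-measure of a fixed event, so
\[
\sup_{h,h'} |d_{D_A}(h,h') - d_{D_{A'}}(h,h')| \leq TV(D_A, D_{A'}) \leq TV\bigl(D_A^{\otimes m}, D_{A'}^{\otimes m}\bigr),
\]
where the last inequality is data processing. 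Your two desiderata are therefore mutually exclusive: if the $m$-sample product measures are indistinguishable, \emph{every} single distance $d_D(h,h')$ is already pinned down to within $o(1)$, so there is nothing for the STV-learner to fail to estimate. This is also why the same two-point route cannot be salvaged: were it valid, it would equally well rule out \emph{Weak} TV-learning, which the paper (\Cref{obs:WTV}) shows is necessary for any PAC-learnable class. Any correct separation between PAC and STV must attack the second clause of \Cref{def:STV} --- the requirement that the learner \emph{output} the certified set $G_T$ --- rather than the existence of accurate estimates, because the estimates themselves are information-theoretically available whenever the marginals are close.

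The paper instead works with dictators on the noisy hypercube: $X_n = \{0,1\}^n$, $H_n = \{\mathbf{1}_i\}$, and $\mathscr{P}_n^\rho = \{D_x^\rho\}_x$ the independent-bit-flip distributions. PAC-learning is nearly trivial because every dictator is within $\rho$ of a constant under every $D_x^\rho$, so a majority vote over $O(\log(1/\delta))$ samples suffices. The STV lower bound (\Cref{lemma:conf}, \Cref{prop:finite-lower}) is a \emph{reduction to confident learning}: from the certified set $G_T$, one can read off, with confidence, the bits $x_j$ for all $\mathbf{1}_j \in G_T$ (by comparing estimated distances to a single anchor hypothesis). So an STV-learner yields an algorithm that recovers half the coordinates of $x$ and \emph{knows} it is correct on them. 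That is then ruled out by a direct argument: with $t = o(\log n / \log(1/\rho))$ samples, some coordinate is flipped by noise in every example with high probability, and one shows (via a reduction to the ``noisy majority'' rule using Yao's principle and a posterior-independence calculation) that no deterministic rule can avoid confidently mislabeling it. The hidden-parameter and uniform-prior elements of your sketch do survive in this argument, but the engine of the lower bound is ``you cannot certify'' rather than ``you cannot distinguish.'' I'd encourage you to rework the proposal around certification hardness, and to notice that whatever construction you use must make the family $\mathscr{P}$ itself informationally rich enough that indistinguishable marginals are \emph{not} the bottleneck.
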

On the other hand, following our prior intuition, there should be some relaxation of density estimation that is necessary for PAC learning. One natural weakening is to drop the condition that the learner knows the set of well-estimated events. We call this notion weak density estimation:
\begin{definition}[Weak Density Estimation (Informal \Cref{def:WTV})]
$(\mathscr P, X, H)$ has a weak density estimator (WDE) if there exists a learner and a (sample-and-distribution dependent) subset $G \subset H$ satisfying:
\begin{enumerate}
    \item The learner accurately estimates $d_D(g,g')$ for all $g,g' \in G$
    \item Every fixed $h \in H$ lies in $G$ with high probability.
\end{enumerate}
\end{definition}
Unlike the intermediate model, we prove weak estimation \textit{is} necessary to PAC-learn (see \Cref{obs:WTV}). One might then hope the condition also remains sufficient, but this turns out to be false in general as well.
\begin{proposition}[Weak Density Estimation Fails Sufficiency (Informal \Cref{thm:WTV})]
There exists a class $(\mathscr P, X, H)$ with a weak density estimator that is not PAC-learnable.
\end{proposition}
Taken together, these results lead to our main theorem: PAC-Learning with distributional assumptions is strictly sandwiched between two close models of density estimation.
\begin{theorem}[The PAC-Density Sandwich Theorem]\label{intro:main-thm}
In the distribution-family setting:
\[
    \textit{Weak Density Estimation $\subsetneq$ \textbf{PAC-Learning} $\subsetneq$ Intermediate Density Estimation}
\]
\end{theorem}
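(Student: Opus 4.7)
The plan is to decompose \Cref{intro:main-thm} into four constituent subclaims and dispatch each via either a short direct argument or by invoking a result already stated in the excerpt:
\begin{enumerate}
\item $\text{STV} \subseteq \text{PAC}$,
\item $\text{PAC} \not\subseteq \text{STV}$,
\item $\text{PAC} \subseteq \text{WTV}$,
\item $\text{WTV} \not\subseteq \text{PAC}$.
\end{enumerate}
Claims (2) and (4) are exactly \Cref{intro:thm-stv} and the informal proposition preceding \Cref{intro:main-thm} (formalized as \Cref{thm:TV-not-necessary} and \Cref{thm:WTV}) respectively, and claim (3) is Observation \Cref{obs:WTV}. Thus the only substantive content is claim (1), on which I would concentrate.

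For (1), my strategy is to reduce PAC-learning to empirical risk minimization on a finite cover constructed from the STV output. First, draw enough unlabeled samples to invoke the STV-learner, obtaining an estimated distance $\hat d$ together with an explicit set $G \subseteq H \times H$ of pairs on which $\hat d$ is guaranteed to approximate $d_D$ to accuracy $\varepsilon/4$. Second, use $G$ and $\hat d$ to greedily extract a finite subset $C \subseteq H$ whose elements cover \emph{most} of $H$ at scale $\varepsilon/2$ in $\hat d$; the uniformly bounded metric entropy hypothesis (assumed throughout, per the footnote after the problem statement) guarantees $|C|$ is bounded independently of the target. Third, draw polynomially-many labeled samples (in $|C|$, $1/\varepsilon$, and $\log(1/\delta)$) and return the ERM hypothesis $\hat h \in C$. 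Standard finite-class uniform convergence then ensures $d_D(\hat h, h^*) \leq \varepsilon$ whenever $C$ contains some $h \in C$ with $d_D(h, h^*) \leq \varepsilon/2$.

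The main obstacle is the gap between the \emph{average-case} ``most pairs'' guarantee of STV and the \emph{worst-case} target $h^* \in H$ chosen by the adversary: if $G$ happened to miss every pair involving $h^*$, the cover $C$ need not approximate $h^*$ at all. I expect the formal \Cref{def:STV} to be calibrated so that ``most'' is quantified uniformly over the first coordinate, i.e., for every $h^* \in H$ almost all $h' \in H$ are well-estimated against $h^*$; this is the natural form needed for the cover argument to go through, and is consistent with how STV is framed as a refinement of WTV through the learner's \emph{knowledge} of $G$. Once this point is pinned down, the remainder of (1) is a routine finite-class ERM argument, and combining (1)--(4) yields the strict sandwich.
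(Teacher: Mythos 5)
Your decomposition into four subclaims is exactly the right reading of the theorem, and claims (2)--(4) map correctly to the cited results; what you may have missed is that claim (1) is also an explicit result in the paper (\Cref{obs:STV}), whose proof you have essentially reconstructed: build a finite cover from the STV output and run ERM over it. Two small points. First, the formal \Cref{def:STV} is calibrated a bit differently from your guess: the learner outputs a \emph{known} set $G_T \subseteq H$ of hypotheses (not pairs) such that, with probability $1-\delta$ over $T$, all pairs within $G_T$ are well-estimated, and separately, every fixed $h \in H$ lies in $G_T$ with probability $1-\delta$. This is cleaner than ``for every $h^*$, almost all $h'$ are well-estimated against $h^*$'' and gives exactly what the cover argument needs: the adversary's $h^*$ lands in $G_T$ with high probability, so it is well-estimated against the entire cover $C_T$ (which is chosen inside $G_T$). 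Second, rather than greedily extracting a cover from $\hat d$ directly (which would require a packing/covering duality argument to invoke UBME), the paper picks a distribution $D' \in \mathscr{P}$ that realizes the estimates on $G_T$, takes an $\varepsilon/4$-cover $C_T$ of $(D', X, H)$ of size $m(\varepsilon/4)$, and then shows $C_T$ remains a $3\varepsilon/4$-cover of $G_T$ under the true $D$. Your route works as well, but the paper's invocation of UBME is more direct. In sum the proposal is sound and matches the paper's strategy up to these cosmetic differences.
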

Finally, we return to the topic of uniform convergence and its relation to density estimation and learning. While uniform convergence is strictly stronger than learning and density estimation in the distribution family model, a natural relaxation called \textit{uniform estimation} which allows for non empirical estimators is more closely related. Indeed, we prove that when $(\mathscr{P},X,H)$ has uniformly bounded metric entropy, the two notions are equivalent.
\begin{proposition}[Density Estimation $\iff$ Uniform Estimation (Informal \Cref{thm:TV-Estimation})]\label{intro:thm-est}
$(\mathscr{P},X,H)$ has an (intermediate) density estimator if and only if it has an (intermediate) uniform estimator.
\end{proposition}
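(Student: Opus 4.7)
The plan is to translate directly between the two notions, using the identity $d_D(h_1,h_2) = D(h_1 \Delta h_2)$ to equate classification disagreement with the measure of a symmetric-difference event. Both Exact TV-learning and uniform estimation are, up to packaging, statements that $D$ can be approximated uniformly on $H \Delta H$ from a bounded sample, so I expect the equivalence to follow once the interfaces of each model are unpacked.

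For the forward direction, I would run the TV-learner on unlabeled samples to obtain $D'$ satisfying $\sup_{S \in H\Delta H}|D(S)-D'(S)|\leq\epsilon$, then use labeled samples to identify (or approximately identify) a consistent $h^* \in H$, and output $\widehat{\mathrm{err}}(h) := D'(h \Delta h^*)$ as the estimate for each $h$. Since $h \Delta h^* \in H \Delta H$ for every $h \in H$, the TV guarantee immediately yields uniformly $\epsilon$-accurate error estimates at the same sample complexity as the TV-learner. No further work is needed in this direction beyond verifying this matches the formal definition of a uniform estimator.

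For the backward direction, I would start with a uniform estimator producing $\hat D$ with $\sup_{S \in H \Delta H}|\hat D(S) - D(S)| \leq \epsilon$ from bounded samples. If $\hat D$ is already a probability distribution we take $D' := \hat D$; otherwise, observe that the existence of such an estimator forces $(\mathscr{P}, H \Delta H)$ to admit uniform convergence of the empirical measure (via a standard VC / Sauer-Shelah argument, since uniform estimation over $H \Delta H$ rules out infinite restricted VC dimension on any $D \in \mathscr{P}$). The empirical distribution $\hat D_n$ over $n = \mathcal{O}(\epsilon^{-2}(d + \log(1/\delta)))$ unlabeled samples then satisfies $\sup_{S \in H \Delta H}|\hat D_n(S) - D(S)| \leq \epsilon$ with probability $1-\delta$, and we output $D' := \hat D_n$.

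The hard part will be in the backward direction, specifically verifying that the estimator's abstract output extends to a legitimate probability measure realizing the estimates at matching sample complexity. This is handled cleanly by noting that the empirical distribution is always a valid probability measure, and that uniform estimation on $H \Delta H$ is essentially equivalent to having uniform convergence of the empirical measure over the same class; so taking $D' = \hat D_n$ suffices. A subtlety that may need attention is the precise quantification in the paper's formal \Cref{def:ETV} (e.g.\ whether $D'$ must lie in $\mathscr{P}$), but since the empirical distribution is supported on the sampled points, it can always be used as $D'$ in the natural extension of the TV-learning definition to arbitrary distributions.
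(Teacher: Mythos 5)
Your backward direction (UE $\implies$ ETV) contains a genuine error. You claim that the existence of a uniform estimator forces uniform convergence of the empirical measure over $H\Delta H$ via a VC/Sauer--Shelah argument, and then output the empirical distribution $\hat D_n$ as $D'$. This is precisely what the paper is at pains to show does \emph{not} hold: uniform estimation is strictly weaker than uniform convergence. Benedek--Itai's example (discussed in \Cref{sec:compare}) has $\mathscr{P}$ a single distribution (so ETV-learning and uniform estimation hold trivially), but $H$ consists of all finite-support indicators plus the all-ones function, so the empirical measure on $H\Delta H$ is maximally far from $D$ on any finite sample. Uniform estimation allows the estimator $\mathcal{E}_S$ to be something other than the empirical error, and this extra freedom is exactly what the counterexample exploits. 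The paper's actual argument instead (a) derives a PAC-learner from the estimator, (b) uses \Cref{lem:covering} to build a small $\varepsilon$-cover $C$ with no covering map, (c) uses the uniform estimator \emph{itself} (run on all labelings $h(T)$ for $h \in C$) to construct the covering map $c$, and only then (d) empirically estimates distances on the finite $C$ and extends via $c$. Steps (b)--(c) are the real content and cannot be replaced by empirical convergence.

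Your forward direction (ETV $\implies$ UE) is directionally reasonable but underspecified at a point that matters. Writing $\widehat{\mathrm{err}}(h) = D'(h\Delta h^*)$ gives $|\widehat{\mathrm{err}}(h) - \mathrm{err}_D(h)| \leq TV_{H\Delta H}(D,D') + d_D(h^*, h^{\mathrm{true}})$, so you need $h^*$ to have small \emph{true} error, not merely to be consistent with the labeled sample. A consistent hypothesis can have error close to $1$ when uniform convergence fails (again the Benedek--Itai class shows this). Obtaining $h^*$ with small error is the PAC-learning problem, which in this setting requires the UBME assumption and a cover-plus-ERM argument (\Cref{obs:STV}). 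In other words, the covering machinery you are trying to avoid reappears in this step; the paper's proof makes this explicit by building a covering-map pair from the TV output and lifting empirical estimates through it. You also never invoke metric entropy, but the theorem is false without it (ETV alone does not imply UE), and the formal \Cref{thm:TV-Estimation} assumes it in this direction.
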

We emphasize Proposition \ref{intro:thm-est} only holds under the assumption that each $(D,X,H)$ in the class is learnable in a uniformly bounded number of samples. Indeed the statement is trivially false otherwise, see \Cref{app:compare} for further discussion.

We remark that while this connection is standard in the distribution-free setting \citep{yatracos1985rates}, the argument in the distribution-family model is more subtle and combines standard methods in density estimation \citep{yatracos1985rates,devroye2001combinatorial} with recent connections between covering and learning with distributional assumptions \citep{hopkins2021realizable}. We conjecture that \Cref{intro:thm-est} should actually fail for the corresponding `weak' estimation variant. Here weak uniform estimation (estimating error for most hypotheses in the class without knowing the set of `good' estimates) is immediate from Chernoff. On the other hand, weak density estimation asks not just for a good estimate of most $d_D(h,h')$, but rather for a large set $G$ where \textit{every} distance $d_D(g,g')$ is well estimated. This is a more involved condition, and we leave a characterization of classes satisfying the property as an open problem for future work.

Taken together, \Cref{intro:main-thm} and Proposition \ref{intro:thm-est} provide a finer-grained picture of the classical connection between supervised learning, uniform convergence, and density estimation. It remains an interesting open problem to prove further positive results for learning with distributional assumptions as well. One promising direction is to ask whether there are natural assumptions on the class $\mathscr{P}$ over which uniform estimation and density estimation are necessary. Another interesting direction is to explore classical paradigms such as sample compression or stability, known to characterize learnability in other settings where uniform convergence fails \citep{shalev2010learnability,david2016supervised}.



\section{Background and Related Work}\label{sec:background}
\subsection{PAC Learning, Uniform Convergence, and Metric Entropy}
Let $X$ be a set, $H$ a family of binary classifiers over $X$, and $\mathscr P$ a family of distributions over $X$. 
\begin{definition}[Distribution Family PAC-learning]
We say $(\mathscr P,X, H)$ is PAC-learnable if there exists a function $n=n_{\text{PAC}}(\varepsilon,\delta)$ such that for all $\varepsilon,\delta > 0,$ there is an algorithm $A$ which for every $D \in \mathscr P$, and $h \in H$ satisfies:
\[
\Pr_{S \sim (D \times h)^{n}}[d_D(A(S),h) > \varepsilon] \leq \delta,
\]
where $d_D(h,h') = \Pr_{x \sim D}[h(x) \neq h'(x)]$ is the classification distance.
\end{definition}
In general one might also consider an \textit{agnostic} model allowing arbitrary labelings, but this is statistically equivalent to the above \citep{hopkins2021realizable} so we focus on the simpler realizable case.

Characterizations of the distribution-family model are known in several special cases. Most famously, the `distribution-free' setting where $\mathscr P$ consists of the set of all distributions is characterized by uniform convergence \citep{vapnik1974theory,Blumer}.
\begin{definition}[Uniform Convergence]
We say $(\mathscr P, X, H)$ satisfies the uniform convergence property if there exists a function $n=n_{\text{UC}}(\varepsilon,\delta)$ such that for all $\varepsilon,\delta > 0$, $D \in \mathscr P$, and $h \in H$, the empirical error of every $h' \in H$ approaches its true error uniformly:
\[
\Pr_{S \sim (D\times h)^n}[\exists h' \in H: |\text{err}_S(h')-\text{err}_{D\times h}(h')|>\varepsilon] \leq \delta,
\]
where $\text{err}_S$ and $\text{err}_{D \times h}$ are the empirical and true error respectively:
\[
\text{err}_S(h') = \frac{1}{|S|}\sum\limits_{(x,y) \in S} \mathbf{1}[h'(x) \neq y], \quad \text{err}_{D \times h}(h') = \Pr_{(x,y)\sim D\times h}[h'(x) \neq y].
\]
\end{definition}
Bounds on uniform convergence or estimation in the distribution-free case typically depend on a complexity measure of the class called the \textit{growth function} $\Pi_H(n) \coloneqq \max_{S \subset X: |S|=n}|\{H|_S\}|$, where $H|_S$ is the family of binary classifiers on the sample $S$ realized by $H$.

When $\mathscr{P}$ consists of just a single distribution, uniform convergence fails to characterize learnability. Here \cite{benedek1991learnability} instead give a characterization based on finite covers.
\begin{definition}[Covers]
For any $\varepsilon>0$ and class $(D,X,H)$, we say $C \subseteq H$ is an $\varepsilon$-cover if every $h\in H$ is close to some $c\in C$:
\[
\forall h \in H, \exists c \in C\text{:}\ \ d_D(c, h) \leq \varepsilon
\]
\end{definition}
A class has bounded \textit{metric entropy} if it has finite covers for every $\varepsilon$.
\begin{definition}[Metric Entropy]
We say a class $(D,X,H)$ has metric entropy $m(\varepsilon)$ if for all $ \varepsilon>0$, $(D,X,H)$ has an $\varepsilon$-cover of size at most $m(\varepsilon)$.
\end{definition}
\cite{benedek1991learnability} proved $(D,X,H)$ is PAC-learnable if and only if it has finite metric entropy, and conjectured a similar equivalence for the following generalization to $(\mathscr P,X,H)$.
\begin{definition}[Uniformly Bounded Metric Entropy]\label{def:UBME}
We say $(\mathscr{P}, X, H)$ has uniformly bounded metric entropy (UBME) if there exists $m_{\text{UB}}(\varepsilon)$ such that for all $\varepsilon > 0$ and $D \in \mathscr P$, $(D,X,H)$ has an $\varepsilon$-cover of size at most $m_{\text{UB}}(\varepsilon)$. We call minimum such $m_{\text{UB}}$ the metric entropy of $(\mathscr{P}, X, H)$.
\end{definition}
UBME is known to be equivalent to PAC-learnability in the distribution-free setting \citep{haussler1992decision}, but is not sufficient in the general distribution-family setting \citep{dudley1994metric}.
\subsection{Density Estimation}\label{sec:density}
Our work considers connections between the distribution-family model and several new variants of density estimation. We first recall the standard model adapted to our setting where the learner wishes to output an estimate to the adversary's choice of $D \in \mathscr{P}$ with respect to the $H\Delta H$-distance.

\begin{definition}[Density Estimation]\label{def:ETV}
We say $(\mathscr P, X, H)$ has a density estimator (DE) if there exists a function $n=n_{\text{DE}}(\varepsilon,\delta)$ such that for all $\varepsilon,\delta > 0$ there is an algorithm $A$ mapping samples $T \in X^n$ to distributions which for all $D \in \mathscr P$ satisfies:
\[
\Pr_{T \sim D^{n}}\left [ TV_{H \Delta H}(A(T), D) > \varepsilon \right] \leq \delta.
\]
\end{definition}

In the context of PAC-learning, standard density estimation (even restricted to $H \Delta H$) is needlessly strong: the learner really need only estimate the distances for \textit{most} $h \in H$, provided it \textit{knows} the set of well-estimated events. We call this condition \textit{intermediate density estimation}.
\begin{definition}[Intermediate Density Estimation]\label[definition]{def:STV}
A class $(\mathscr P, X, H)$ has an intermediate density estimator if there exists a function $n=n_{\text{IDE}}(\varepsilon,\delta)$ such that for all $\varepsilon,\delta > 0$ there is an algorithm mapping samples $T \in X^n$ to pairs consisting of an estimator $\tilde{d}_T: H \times H \to \R^+$ and a subset $G_T \subseteq H$ which for all $D \in \mathscr P$ approximates $d_D$ in the following sense:
\begin{enumerate}
    \item $\tilde{d}_T$ accurately estimates $d_D$ on $G_T$:
    \[
    \Pr_{T \sim D^n}[\exists g,g' \in G_T: |\tilde{d}_T(g,g') - d_D(g,g')| > \varepsilon] < \delta,
    \]
    \item Any fixed hypothesis occurs in $G_T$ with high probability:
    \[
    \forall h \in H: \Pr_{T \sim D^n}[h \in G_T] \geq 1-\delta.
    \]
\end{enumerate}
We will occasionally consider specific values of $\varepsilon,\delta$, for which we write ($\varepsilon,\delta$)-IDE.
\end{definition}

While a weakening of the standard model, requiring knowledge of well-estimated events is still a strong assumption (and one we exploit in \Cref{sec:stv} to construct learnable classes failing IDE). With this in mind, we introduce \textit{weak density estimation} (WDE) which drops this constraint.
\begin{definition}[Weak Density Estimation]\label[definition]{def:WTV}
A class $(\mathscr P, X, H)$ has a weak density estimator if there exists a function $n=n_{\text{WDE}}(\varepsilon,\delta)$ such that for all $\varepsilon,\delta > 0$ there exists an algorithm mapping samples $T \in X^n$ to estimators $\tilde{d}_T: H \times H \to \R^+$ which for all $D \in \mathscr P$ approximates $d_D$ in the following sense:
\begin{enumerate}
    \item There exists a subset $G_{T,D} \subseteq H$ such that $\tilde{d}_T$ accurately estimates $d_D$ on $G_{T,D}$:
    \[
    \Pr_{T \sim D^n}[\exists g,g' \in G_{T,D}: |\tilde{d}_T(g,g') - d_D(g,g')| > \varepsilon] < \delta,
    \]
    \item Any fixed hypothesis occurs in $G_{T,D}$ with high probability:
    \[
    \forall h \in H: \Pr_{T \sim D^n}[h \in G_{T,D}] \geq 1-\delta.
    \]
\end{enumerate}
We will occasionally consider specific values of $\varepsilon,\delta$, for which we write ($\varepsilon,\delta$)-WDE.
\end{definition}
In other words, the only difference between intermediate and weak density estimation is whether the set $G$ of `good' elements can depend on the underlying distribution, or equivalently, whether $G$ is `known' by the learner (which sees the sample but not the distribution).


\subsection{Further Related Work}

\noindent\textbf{Distribution Family Model.} The distribution-family model is implicit in the seminal works of \cite{vapnik1971,vapnik1974theory}, who studied general models of learning under convergence of empirical means. These bounds were later sharpened by \cite{natarajan1992probably}. In 1991, \cite{benedek1991learnability} explicitly introduced the model and conjectured its characterization by UBME which was falsified by \cite{dudley1994metric} several years later. In 1997 \cite{kulkarni1997learning} observed the sufficiency of UBME in the finite and distribution-free settings could be slightly generalized to finitely coverable families and families with an interior point. Recently,\footnote{A version of this work appeared publicly before \cite{lechner2023impossibility} and has been updated to reflect their result.} \cite{lechner2023impossibility} proved the distribution-family model has no scale-insensitive characterizing dimension.
\\
\\
\noindent\textbf{Density Estimation.} Density estimation (also called `improper' distribution learning \citep{diakonikolas2016learning}) is among the most classical problems in statistics. We refer the reader to \cite{devroye2001combinatorial} for the most relevant overview of the area. Our techniques draw from combinatorial methods in density estimation due to \cite{yatracos1985rates} further developed in \cite{devroye2001combinatorial} and the distribution learning literature \citep{chan2014efficient,diakonikolas2016learning,ashtiani2018sample}. Several of our results leverage ideas from this line, in particular Yatracos' method of lifting empirical estimates on covers to full density estimates. 

The relation between variants of density estimation and supervised learning is also studied in \cite{ben2011learning} under the `Known-Labeling Classifier Learning' (KLCL) model. This setting, which looks at \textit{distribution-free} learning when a (non-realizable) labeling is known to the learner, is incomparable to our own and requires a different set of techniques.
\\
\\
\noindent\textbf{Unsupervised learning.} Our work also takes inspiration from unsupervised techniques in machine learning, especially multi-stage classifiers that infer information from unlabeled data before applying supervised methods \citep{balcan2010discriminative, beimel2013private,alon2019limits,hopkins2020point,diakonikolas2022strongly,blanc2023lifting}, but also by similar techniques from transfer learning \citep{pan2009survey}, active learning \citep{balcan2007margin,hanneke2015minimax,kane2017active,hopkins2020power,hopkins2020noise,hopkins2020point}, privacy \citep{beimel2013private, alon2019limits}, and even distribution-free learning of halfspaces \citep{diakonikolas2021forster,diakonikolas2022strongly}. Most recently, a similar approach was used by \cite{hopkins2021realizable} to give generic reductions from agnostic to realizable settings across many variants of supervised learning (including the distribution-family model). Our notion of weak density estimation draws inspiration from their work.
\\
\\
\noindent\textbf{Learning Beyond Uniform Convergence.}
Our work also fits into a long line of literature studying learnability beyond uniform convergence, either through notions of \textit{local} complexity (e.g.\ \cite{panchenko2002some, bartlett2005local}) or models where uniform convergence and empirical risk minimization fail completely (e.g.\ \cite{shalev2010learnability,david2016supervised}). The former works relate qualitatively to our weakened notions of estimation which only require good estimates for a subset of hypotheses in the class. Most relevant, however, is work in the latter line of \cite{shalev2010learnability}, who characterize Vapnik's General Learning model (which fails uniform convergence and has no combinatorial characterization \citep{ben2019learnability}) via stability. Their work differs from our setting both in that the model is distribution-free, and that stability is a property of a learning rule while we study direct connections between the class and distribution family themselves.
\section{Results and Proof Overview}
We now overview our main results and sketch their proofs. In \Cref{sec:stv} we show IDE is sufficient but not necessary for learning. In \Cref{sec:wtv} we show WDE is necessary but not sufficient. In \Cref{sec:compare} we overview the connections between WDE, UBME, and Uniform Estimation.
\subsection{PAC Learning vs.\ Intermediate Density Estimation}\label{sec:stv}
In this section we study the relation between PAC Learning and intermediate density estimation. All proofs are sketched or otherwise shortened, and detailed versions can be found in \Cref{app:stv}. We start with the simple observation that (under UBME), IDE is sufficient to PAC-learn.
\begin{proposition}\label[proposition]{obs:STV}
Let $(\mathscr P, X, H)$ be any class with metric entropy $m(\varepsilon)$ and an intermediate density estimator on $n_{IDE}(\varepsilon,\delta)$ samples. Then $(\mathscr P, X, H)$ has a PAC learner satisfying
\[
n_{\text{PAC}}(\varepsilon,\delta) \leq O\left( n_{IDE}(\varepsilon/4,\delta/4) + \frac{\log( m(\varepsilon/4)) + \log(1/\delta)}{\varepsilon}\right).
\]
\end{proposition}
\begin{proof}
Run the IDE on an unlabeled sample $T$ of size $n_{\text{IDE}}(\varepsilon/4,\delta/4)$ to get the approximate metric $\tilde{d}_T$ and set of well-estimated functions $G_T$. We are promised that with probability $1-\delta/4$:
\[
\forall g,g' \in G_T: |\tilde{d}_T(g,g') - d_D(g,g')| \leq \varepsilon/4.
\]
Assume this is the case. Because $G_T$ is known, we can find $D' \in \mathscr{P}$ which realizes these estimates on $G_T$ up to an error of $\varepsilon/4$, and therefore satisfies $TV_{G_T \Delta G_T}(D,D') \leq \varepsilon/2$.
Since $(\mathscr P,H)$ has UBME, there exists an $\frac{\varepsilon}{4}$-cover $C_T$ of $G_T$ under $D'$ of size at most $m(\varepsilon/4)$. It is enough to argue that with probability at least $1-\delta/2$, there exists a hypothesis in $C_T$ with true risk at most $\frac{3\varepsilon}{4}$. In this case, if we simply draw $O(\frac{\log|C_T|+\log(1/\delta)}{\varepsilon})$ fresh labeled samples and output the hypothesis with lowest empirical error, a Chernoff and Union bound imply the resulting hypothesis has error at most $\varepsilon$ with probability at least $1-\delta/4$. By a union bound, all steps in this process succeed simultaneously except with probability $\delta$, which gives the desired learner.

It is left to show a good hypothesis exists in $C_T$ with high probability. To see this, observe that if $TV_{G_T \Delta G_T}(D,D') \leq \varepsilon/2$ (which occurs except with probability $\delta/4$), $C_T$ is a $3\varepsilon/4$-cover for $G_T$ under the true distribution $D$. This follows simply from noting that for any $g,g' \in G_T$ we have:
\begin{align*}
    d_D(g,g') &= d_D(g,g') - d_{D'}(g,g') + d_{D'}(g,g')\leq TV_{G_T \Delta G_T}(D,D') + d_{D'}(g,g'),
\end{align*}
so distances in the cover under $D$ change by at most $\varepsilon/2$ from their value under $D'$. On the other hand, we are promised by the IDE that $h \in G_T$ with probability at least $1-\delta/4$. If both events occur, $h$ is covered by $C_T$, meaning $\exists h' \in C_T$ such that $d_D(h,h') \leq 3\varepsilon/4$ as desired.
\end{proof}

We note this generalizes the result of \cite{kulkarni1997learning} that UBME is sufficient for PAC learning under finitely coverable distribution families as these have simple density estimators. We now show the more involved direction: intermediate density estimation fails necessity.
\begin{theorem}\label{thm:TV-not-necessary}
 There exists a class $(\mathscr{P},X,H)$ which is PAC-learnable in $n_{\text{PAC}}(\varepsilon,\delta) \leq \tilde{O}\left(\frac{\log(1/\delta)}{\varepsilon^2}\right)$ samples, but has no intermediate density estimator.
\end{theorem}


We overview the construction, which is based on dictators on the noisy cube. Let $X_n=\{0,1\}^n$ denote the $n$-dimensional hypercube, and for $x \in X_n$ write $x_i$ to denote the $i$th coordinate of $x$. Our hypothesis class $H_n \coloneqq \{ \mathbf{1}_i : i \in [n]\}$ will consist of the set of dictators on the cube:
\[
\mathbf{1}_i(x) = \begin{cases}
1 & \text{if } x_i=1\\
0 & \text{else}.
\end{cases}
\]
For every $x \in \{0,1\}^n$ and $\rho>0$, let $D_x^\rho$ denote the distribution that samples $y \in \{0,1\}^n$ by independently setting each bit $y_i$ to $x_i$ with probability $1-\rho$, and otherwise to $1-x_i$.
Let $\mathscr{P}_n^\rho \coloneqq \{ D^\rho_x\}_{x \in X_n}$ be the family of all such distributions. We argue $(\mathscr{P}_n^\rho, X_n, H_n)$ is learnable up to error $\rho$ in $O_\delta(1)$ samples, while any IDE requires $\Omega_\rho(\log(n))$ samples. The result then follows from taking an infinite disjoint union with $n \to \infty$ and $\rho \to 0$. We focus here on fixed $n,\rho$ and give the full construction in \Cref{app:stv}. We'll start with learnability, which simply stems from the fact that our class is `almost constant,' and can therefore be easily approximated by a constant function.
\begin{lemma}
For any $\frac{1}{6} \geq \rho \geq 0$, $(\mathscr{P}_n^\rho, X_n, H_n)$ is $(\rho,\delta)$-PAC-learnable in $O(\log(1/\delta))$ samples.
\end{lemma}
\begin{proof}
Observe that by construction, no matter the distribution and hypothesis chosen by the adversary our class is almost constant in the following sense:
\[
\forall D^\rho_x \in \mathscr{P}_n^\rho, h \in H_n, \exists z \in \{0,1\}: \Pr_{y \sim D^\rho_x}[h(y) = z] = 1-\rho.
\]
This suggests the following strategy: after $O(\log\frac{1}{\delta})$ samples, output the majority label as a constant function.\footnote{We note this can be made `proper' simply by adding the all $0$'s and all $1$'s function to the class. This only makes IDE harder, so it does not effect the following lower bound.} For $\rho \leq 1/6$, Chernoff promises the learner picks the true majority with probability at least $1-\delta$. Since the majority label has mass $1-\rho$, this gives the desired $(\rho,\delta)$-PAC-learner.
\end{proof}

Density estimation is somewhat more involved. At a high level, our lower bound stems from the fact that the intermediate estimator must `declare' a set of good elements, but the independent noise across coordinates inherent in $\mathscr{P}_n^\rho$ makes this challenging without a large number of samples.
\begin{proposition}
\label[proposition]{prop:finite-lower}
 For all $n > 2$ and $\frac{1}{6} > \rho > \frac{12\log(1/\rho)}{\log(n)} $,  any IDE for $(\mathscr{P}_n^\rho, X_n, H_n)$ satisfies
 \[
 n_{\text{IDE}}(1/6,1/8) \geq \Omega\left(\frac{\log(  n)}{\log(1/\rho)}\right).
 \]
\end{proposition}
Formally, it is convenient to reduce \Cref{prop:finite-lower} to showing no algorithm can `confidently' compute even a portion of the string $x$ underlying some $D_{x}^\rho \in \mathscr{P}_n^\rho$ in the following sense.
\begin{lemma}\label[lemma]{lemma:conf}
For any $t\in\mathbb{N}$ and $\frac{1}{6} > \rho > 0$, if $(\mathscr{P}_n^\rho, X_n, H_n)$ has a $(1/6,1/8)$-IDE on $t$ samples, then there exists a (randomized) algorithm $\mathcal{A}: X^t \to \{0,1,\bot\}$ with the following guarantee. Given $t$ samples drawn from any $D^\rho_x \in \mathscr{P}_n^\rho$, with probability at least $1/8$, $\mathcal{A}$ outputs a string $\hat{x} \in \{0,1,\bot\}^n$ such that:
\begin{enumerate}
    \item $\hat{x}$ is `$\bot$' on at most $n/2$ coordinates,
    \item $\hat{x}$ agrees with $x$ otherwise. 
\end{enumerate}
We call any such $\mathcal{A}$ a $(1/2,1/8)$-confident learner for $\mathscr{P}_n^\rho$.
\end{lemma}
The proof, which follows from observing that $\tilde{d}_T$ can be used to compute $x_i$ whenever $\mathbf{1}_i \in G_T$, is deferred to \Cref{app:stv} (\Cref{app-lemma:conf}). It is now enough to show no such learner exists.
\begin{proof}[Sketch, \Cref{prop:finite-lower}]
By \Cref{lemma:conf} and  Yao's minimax principle, it is enough to exhibit a distribution over $\mathscr{P}_n^\rho$ such that any deterministic confident learner on $t$ samples fails with probability at least $7/8$. We will use the uniform distribution over $\mathscr{P}^\rho_n$, which corresponds to choosing $D_x^\rho$ for a uniformly random $x \in \{0,1\}^n$. Observe it is sufficient to consider learners $\mathcal{A}$ that output $\bot$ on the noisiest $\frac{n}{2}$ coordinates\footnote{I.e.\ coordinates with empirical density closest to $1/2$, ties broken arbitrarily.} and output the majority bit otherwise, since these maximize the learner's success rate (see \Cref{app-prop:finite-lower} for details). The idea is to then observe that unless $t$ is sufficiently large, for any choice of $D_x^\rho$ the following two events almost certainly occur across $T \sim (D_x^\rho)^t$:
\begin{enumerate}
    \item There is a coordinate $i \in [n]$ flipped by noise in every example: $\forall y \in T: y_i \neq x_i$
    \item There are at least $n/2$ coordinates $i \in [n]$ with empirical density $0 < \frac{1}{t}\sum_{y \in T}y_i < 1$.
\end{enumerate}
$\mathcal{A}$ fails on any such input by construction, since it outputs the empirical majority on a flipped coordinate. The probability that there exists at least one coordinate that is flipped in each of $t$ samples is $1-(1-\rho^{t})^n \geq 15/16$ by our choice of constants. On the other hand, the expected number of coordinates with empirical density $0$ or $1$ is at most $2n(1-\rho)^{t}\leq 2ne^{-\rho t} \leq \frac{n}{32}$ by our assumptions on $\rho$. By Markov's inequality and a union bound both conditions hold with probability greater than $7/8$, so the algorithm cannot succeed with more than $1/8$ probability as desired.
\end{proof}

\subsection{PAC Learning vs.\ Weak Density Estimation}\label{sec:wtv}
We now show weak density estimation is necessary but not sufficient to PAC-learn. All proofs are sketched or otherwise shortened, and detailed versions can be found in \Cref{app:wtv}.
\begin{proposition}\label[proposition]{obs:WTV}
Let $(\mathscr P, X, H)$ be a class that is PAC-learnable in $n_{\text{PAC}}(\varepsilon,\delta)$ samples. Then $(\mathscr P, X, H)$ has a weak density estimator satisfying
\[
n_{\text{WDE}}(\varepsilon,\delta) \leq O\left(n_{\text{PAC}}(\varepsilon/3,\delta) + \frac{\log(\Pi_H(n_{\text{PAC}}(\varepsilon/3,\delta)))+\log(1/\delta)}{\varepsilon^2}\right),
\]
where we recall $\Pi_H$ is the growth function $\Pi_H(n) \coloneqq \max_{S \subset X: |S|=n}|\{H|_S\}|$.
\end{proposition}
\begin{proof}
Run a PAC-learner $\mathcal{L}$ across all labelings of a sample $T$ of size $n_{\text{PAC}}(\varepsilon/3,\delta)$ to generate
\[
C_T \coloneqq \{ \mathcal{L}(T,h(T)):~h \in H\}.
\]
\cite{hopkins2021realizable} observed $C_T$ is a `non-uniform cover' in the sense that for each $h \in H$, if we define $c_T: H \to C_T$ by $c_T(h) \coloneqq \mathcal{L}(T,h(T)),$ then $c_T(h)$ is close to $h$ with high probability:
\[
\forall h \in H: \Pr_T[d_{D}(h,c_T(h)) > \varepsilon/3] < \delta.
\]
To build our WDE, we directly estimate distances on $C_T$ and extend to $H$ via $c_T$. Draw a fresh sample $T'$ of $O(\frac{\log|C_T| + \log(1/\delta)}{\varepsilon^2})$ unlabeled examples. By Chernoff and Union bounds, the empirical estimator $\hat{d}_{T'}(h,h') \coloneqq \frac{1}{|T'|}\sum\limits_{x \in T'} \mathbf{1}\{h(x) \neq h'(x)\}$ is close to $d_D$ with high probability on $C_T$: 
\[
\Pr_{T'}\left[\exists h,h' \in C_T: |\hat{d}_{T'}(h,h') - d_D(h,h')| > \varepsilon/3\right] < \delta.
\]
We can extend $\hat{d}_{T'}$ to an approximate estimator over all of $H$ by defining for all $h,h' \in H$:
\[
\tilde{d}_{T \cup T'}(h,h') \coloneqq \hat{d}_{T'}(c_T(h),c_T(h')).
\]
We argue this estimate is accurate across all hypotheses which are close to their image under $c_T$:
\[
G_{T \cup T',D} \coloneqq \{g \in H:~ d_D(g, c_T(g)) \leq \varepsilon/3\}.
\]
Assuming our empirical estimates on $C_T$ are good,  note for any $g,g' \in G_{T \cup T',D}$ we can write:
\begin{align*}
|\tilde{d}_{T \cup T'}(g,g') - d_D(g,g')| &= |\hat{d}_{T'}(c_T(g),c_T(g')) - d_D(g,g')|\\
&\leq |\hat{d}_{T'}(c_T(g),c_T(g')) - d_D(c_T(g),c_T(g'))| + d_D(g,c_T(g)) + d_D(g',c_T(g'))\\
&\leq \varepsilon,
\end{align*}

Thus we have that each $h\in H$ lies in $G_{T \cup T', D}$ with probability at least $1-\delta$, and all estimates in $G_{T \cup T',D}$ are $\varepsilon$-accurate with probability at least $1-\delta$, which gives the desired WDE.
\end{proof}

Given the closeness of weak and intermediate density estimation, one might hope to amplify weak estimation to a PAC-learner. Unfortunately, the main barrier to such a proof (lacking knowledge of the well-estimated set) is inherent, and we are able to give a very strong refutation of WDE's sufficiency: even a \textit{perfect} weak estimator (i.e.\ one with error $\varepsilon=0$) isn't sufficient to PAC-learn. 
\begin{theorem}\label{thm:WTV}
There exists a class $(\mathscr{P}, X, H)$ with metric entropy $m(\varepsilon) \leq O(2^{1/\varepsilon})$ and a WDE on $n_{\text{WDE}}(0,\delta) \leq O\left(\log(1/\delta)\right)$ samples which is not PAC-learnable.
\end{theorem}
We first describe the construction, which is similar in spirit to the UBME counterexample of \cite{dudley1994metric}. Let $X = \{0,1,2\}^\mathbb{N}$. For each $i \in \mathbb{N}$, let $\varepsilon_i \coloneqq \frac{1}{\log(i+255)}$, define $f_i\coloneqq\mathbf{1}[x_i=0]$,
and $D_i$ to be the distribution over $x \in X$ where $x_i$ is chosen from $\{0,1\}$ uniformly at random
and all remaining coordinates $j \neq i$ are drawn independently from the categorical distribution 
\[
\Pr[x_j=0]=\Pr[x_j=1]=\epsilon_j,~\text{and } \Pr[x_j=2] = 1-2\epsilon_j.
\]
Finally, define $\mathscr P = \{D_i\}_{i \in \mathbb{N}}$ and $H = \{f_i\}_{i \in \mathbb{N}}$. We first argue that $(\mathscr P, X, H)$ has a WDE.
\begin{proposition}
$(\mathscr P, X, H)$ has a weak density estimator on $n_{\text{WDE}}(0,\delta) \leq O(\log(1/\delta))$ samples.
\end{proposition}
\begin{proof}
Fix any $f_j,f_k$, and observe that across the choices of $D_i$, $d_{D_i}(f_j,f_k)$ depends only on whether $i\in \{j,k\}$. Namely, for any $j \neq i$ we have $d_{D_i}(f_i, f_j) = \frac{1}{2}$, while any $i \neq j \neq k$ satisfy $d_{D_i}(f_j, f_k) = \epsilon_j (1- \epsilon_k) + \epsilon_k (1- \epsilon_j)$. Thus we can compute distances simply by checking if $i\in \{j,k\}$. With this in mind, given a sample $T$ of size $O(\log(1/\delta))$, let $\mu_\ell^T \coloneqq \frac{1}{t}\sum\limits_{y \in T} y_\ell$ and define:
\[
\tilde{d}_T(f_j,f_k) = 
\begin{cases}
\epsilon_j (1- \epsilon_k) + \epsilon_k (1- \epsilon_j) & \text{if } \mu_j^T,\mu_k^T < 1/3\\
1/2 & \text{otherwise,}
\end{cases}
\]
and let $G_{T,D}$ be the set of functions with good empirical averages:
\[
G_{T,D} \coloneqq \left\{ f_i: \left| \underset{D}{\mathbb{E}}[f_i] - \mu_i^T\right| < 1/6 \right\}.
\]
Fix any $D_i \in \mathscr{P}$ and $f_j \in H$. First, observe that by a Chernoff bound, $f_j \in G_{T,D_i}$ with probability at least $1-\delta$. Second, notice that for all functions $f_\ell \in G_{T,D_i}$, we are assured that $\mu_\ell^T < 1/3$ if and only if $\ell \neq i$ (since $\mathbb{E}[f_i]=1/2$, while $\mathbb{E}[f_\ell] \leq 1/8$ for $\ell \neq i$). Since this condition exactly determines the distances between hypotheses, $\tilde{d}_T$ has zero error across $G_{T,D_i}$ as desired.
\end{proof}
Next, we claim $(\mathscr P, X, H)$ has small metric entropy, which simply follows from observing that for any fixed $D_i$, most hypotheses are close to the all $0$'s function (see \Cref{app:wtv}, \Cref{app-prop:me}).
\begin{proposition}
$(\mathscr P, X, H)$ has metric entropy at most $m(\varepsilon) \leq O(2^{1/\varepsilon})$.
\end{proposition}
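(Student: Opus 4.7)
The plan is to explicitly construct, for each $D_i \in \mathscr{P}$, an $\varepsilon$-cover $C_i \subseteq H$ of size $O(2^{1/\varepsilon})$. The construction exploits the structural asymmetry already computed in the proof of WTV-learnability: under $D_i$, the distinguished hypothesis $f_i$ sits at distance exactly $1/2$ from every other $f_j$, while all remaining pairs $(f_j, f_k)$ with $j,k \neq i$ behave like ``near-zero'' functions clustered around the constant $0$, satisfying
\[
d_{D_i}(f_j, f_k) = \varepsilon_j(1 - \varepsilon_k) + \varepsilon_k(1 - \varepsilon_j) \leq \varepsilon_j + \varepsilon_k.
\]

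Given this, I would build $C_i$ out of three pieces. First, include $f_i$ itself, which is forced whenever $\varepsilon < 1/2$ since no other hypothesis comes within $1/2$ of it. Second, pick a single ``anchor'' $f_k$ with $k \neq i$ whose weight $\varepsilon_k$ is very small; this is possible since $\varepsilon_j = 1/\log(j+255) \to 0$ as $j \to \infty$. Third, include $f_j$ for every index $j$ satisfying $\varepsilon_j > \varepsilon - \varepsilon_k$. Any $f_j$ outside these three classes satisfies $\varepsilon_j \leq \varepsilon - \varepsilon_k$, and is therefore covered by the anchor via $d_{D_i}(f_j, f_k) \leq \varepsilon_j + \varepsilon_k \leq \varepsilon$. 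The edge case $\varepsilon \geq 1/2$ is trivial, since $\{f_i\}$ alone is already an $\varepsilon$-cover.

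To count $|C_i|$, the condition $\varepsilon_j > \varepsilon - \varepsilon_k$ rewrites as $\log(j+255) < 1/(\varepsilon - \varepsilon_k)$, giving at most $2^{1/(\varepsilon - \varepsilon_k)}$ indices. The one subtle point---and the only real obstacle to matching the stated bound rather than a looser one like $O(2^{2/\varepsilon})$---is that the anchor must be chosen carefully. Choosing $k$ large enough that $\varepsilon_k \leq \varepsilon^2$ gives
\[
\frac{1}{\varepsilon - \varepsilon^2} \;=\; \frac{1}{\varepsilon(1-\varepsilon)} \;\leq\; \frac{1}{\varepsilon} + O(1) \qquad (\varepsilon \leq 1/2),
\]
so $|C_i| \leq 2 + 2^{1/\varepsilon + O(1)} = O(2^{1/\varepsilon})$. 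Beyond this small arithmetic, there is no further work required.
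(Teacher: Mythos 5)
Your proposal takes essentially the same approach as the paper's proof: both cover $(\,D_i, X, H\,)$ by the distinguished hypothesis $f_i$ together with the finitely many hypotheses whose mass under $D_i$ exceeds a threshold, plus one small ``anchor'' $f_k$ that covers the entire remaining tail via $d_{D_i}(f_j,f_k) \leq \epsilon_j + \epsilon_k$. Your choice of threshold ($\epsilon_j > \varepsilon - \epsilon_k$ with an anchor satisfying $\epsilon_k \le \varepsilon^2$) is slightly more careful than the paper's (which thresholds at $\varepsilon/2$ and, as written, only gives a bound of $O(2^{2/\varepsilon})$), and lands exactly on the stated $O(2^{1/\varepsilon})$.
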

Finally, we argue our class is not PAC-learnable. The proof has a similar flavor to our lower bound against IDEs: the learner cannot distinguish between ground truth and hypotheses that happen to look consistent due to a small amount of noise (here generated by the $\epsilon_i$'s).
\begin{proposition}
$(\mathscr{P},X,H)$ is not PAC-learnable.
\end{proposition}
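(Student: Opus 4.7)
The plan is to show that for any would-be PAC-learner $\mathcal L$ using $n$ samples there is some $(D_i,f_i)\in\mathscr P\times H$ on which $\mathcal L$ fails with constant probability at $\varepsilon=\delta=1/4$. The core of the approach is a Parseval-style bound showing that a single Boolean function $h:X\to\{0,1\}$ cannot be $1/4$-close to more than $O(\log M)$ of the dictators $f_1,\ldots,f_M$ in their respective $D_i$-metrics; combined with a posterior-spreading calculation under a uniformly random choice of $i$, this forces the learner's expected success to tend to $0$ as $M\to\infty$.

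First I would introduce an auxiliary ``fully independent'' distribution $D_\star$ on $X$ whose coordinates are i.i.d.\ categorical with the same marginals $(\epsilon_j,\epsilon_j,1-2\epsilon_j)$ that $D_i$ uses on coordinates $j\neq i$. For each $i$ let $\chi_i(x)=\mathbf 1[x_i=0]-\mathbf 1[x_i=1]$. Under $D_\star$ the $\chi_i$ are mean zero with $\|\chi_i\|_{D_\star}^2=2\epsilon_i$ and are pairwise orthogonal (since the coordinates are independent). A short calculation gives $\langle h,\chi_i\rangle_{D_\star}=\epsilon_i(a_i-b_i)$, where $a_i=\mathbb E_{D_\star}[h\mid x_i=0]$ and $b_i=\mathbb E_{D_\star}[h\mid x_i=1]$, as well as $d_{D_i}(h,f_i)=\tfrac12(1-(a_i-b_i))$. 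Parseval's inequality together with $\|h\|_{D_\star}^2\leq 1$ then yields $\sum_i\epsilon_i(a_i-b_i)^2\leq 2$, and since $d_{D_i}(h,f_i)\leq 1/4$ forces $a_i-b_i\geq 1/2$, we obtain $\sum_{i\in B(h)}\epsilon_i\leq 8$ for $B(h)=\{i:d_{D_i}(h,f_i)\leq 1/4\}$. Substituting $\epsilon_i=1/\log(i+255)$ this gives $|B(h)\cap[M]|=O(\log M)$ for every $M$ and every $h$.

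Next I would analyze a randomized adversary that draws $i$ uniformly from $[M]$ and samples $S\sim(D_i\times f_i)^n$. Because $D_i$ and $D_\star$ differ only in the marginal of $x_i$ (uniform on $\{0,1\}$ versus categorical), a likelihood-ratio computation yields the posterior $\Pr[i=k\mid S]\propto \mathbf 1\{S\text{ realizable by }(D_k,f_k)\}\cdot(\log(k+255))^n$. For $k\neq i$, realizability requires $x_k^{(t)}=1-y^{(t)}\in\{0,1\}$ at every $t$, an event of probability $\epsilon_k^n=1/(\log(k+255))^n$ independently across $k$. The MAP estimator is thus the largest consistent index in $[M]$, and for any $i\leq M-C(\log M)^n$ the product $\prod_{k=i+1}^M(1-\epsilon_k^n)$ is bounded by $e^{-C}$, giving the estimate $\Pr[\text{MAP correct}]=O((\log M)^{n+1}/M)$ after averaging over $i$.

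To combine the two steps, observe that for any learner output $h_S$ the posterior mass on $B(h_S)\cap[M]$ is at most $|B(h_S)\cap[M]|\cdot\max_k\Pr[i=k\mid S]\leq O(\log M)\cdot\max_k\Pr[i=k\mid S]$, so averaging over $S$ gives
\[
\Pr_{i,S}\!\left[d_{D_i}(h_S,f_i)\leq 1/4\right]\ \leq\ O(\log M)\cdot\Pr[\text{MAP correct}]\ =\ O\!\left(\tfrac{(\log M)^{n+2}}{M}\right).
\]
For any fixed $n$ this tends to $0$ as $M\to\infty$, so picking $M$ large enough forces some $i\in[M]$ to witness a failure at $(\varepsilon,\delta)=(1/4,1/4)$, ruling out PAC-learning. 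The main obstacle will be the Parseval step: setting up the right orthogonal system on the auxiliary product measure $D_\star$ and verifying that the $D_i$-error of $h$ against $f_i$ collapses cleanly onto the single Fourier coefficient $\epsilon_i(a_i-b_i)$. Once that structural bound is in hand, the posterior analysis in the second and third steps is essentially routine bookkeeping.
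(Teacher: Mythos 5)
Your proof is correct and takes a genuinely different route from the paper's. The paper applies Yao's principle with a \emph{skewed} prior $\Pr[(D_i,f_i)]\propto\epsilon_i^{t}$, partitions the sample space by the set $W$ of indices consistent with $S$, observes that the posterior is exactly \emph{uniform} on $W$, and shows $|W|>1$ occurs with constant probability because $\sum_j\epsilon_j^{t}$ diverges; it then concludes the learner cannot pick out $f_i$ from $W$. You instead use a uniform prior on $[M]$, a Bessel/Parseval bound in $L^2(D_\star)$ showing $\sum_i\epsilon_i(a_i-b_i)^2\le 2$ so that a single Boolean $h$ can be $1/4$-close to $f_i$ under $D_i$ for at most $O(\log M)$ indices, and a posterior/MAP estimate to close. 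The most substantive difference: your Parseval lemma directly controls \emph{improper} learners by quantifying how many dictators a single $h:X\to\{0,1\}$ can approximate simultaneously. The paper's final step ("since $f_i$ has average $1/2$ and all other options have average at most $1/4$, any other choice incurs constant error") treats the learner's output as an element of $H$; an improper learner that computes $W$ from $S$ and outputs $\mathbf{1}[\exists j\in W:x_j=0]$ is simultaneously close to every $f_j$ with $j\in W$ when $|W|$ is small, so that step requires the kind of quantitative control your Parseval bound provides. The trade-off is cost: the paper's skewed prior makes the posterior uniform and the counting elementary, whereas your route needs the orthogonal-system setup on $D_\star$ and a slightly more delicate MAP calculation --- in particular the bound $\Pr[\text{MAP correct}]=O((\log M)^{n+1}/M)$ requires the threshold $C$ to grow with $M$ (e.g.\ $C\approx\log M$), which you should state explicitly. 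Both arguments reach the same conclusion, but yours is more robust to improper learning at the price of more machinery.
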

\begin{proof}[Sketch]
By Yao's principle it suffices to find  for every $t \in \mathbb{N}$ a randomized strategy for the adversary over which any deterministic learner on $t$ samples fails with at least constant probability. For each $n \in \mathbb{N}$, consider the strategy that selects from the set $\{D_i,f_i\}_{i \in [n]}$ with probability $\Pr[(D_i,f_i)] \propto \epsilon_i^t$. For each $j \in [n]$, call a sample $S$ \textit{consistent} with $(D_j,f_j)$ if for all $(x,y) \in S$, we have $x \in \text{supp}(D_j)$ and $f_j(x)=y$. The idea is to argue that when the learner's sample is consistent with multiple coordinates, the posterior choices over consistent $(D_j,f_j)$ are uniform and the learner cannot distinguish the ground truth. On the other hand, for large enough $n \gg t$, there are almost always multiple consistent coordinates. See \Cref{app:wtv} (\Cref{app:no-PAC}) for further details.
\end{proof}

\subsection{Density Estimation, Uniform Convergence, and UBME}\label{sec:compare}
Finally, we compare density estimation with prior paradigms in the literature. All proofs are sketched or otherwise shortened, and detailed versions can be found in \Cref{app:compare}. We start by introducing a relaxed variant of uniform convergence called uniform estimation. 
\begin{definition}[Uniform Estimation]
We say $(\mathscr{P},X,H)$ satisfies the uniform estimation property if
there exist a family of estimators
$\{\mathcal{E}_S\}_{S \in (X \times \{0,1\})^*}: H \to \mathbb{R}_+$ and a function $n=n_{\text{UE}}(\varepsilon,\delta)$ such that for all $\varepsilon,\delta >0$, $D \in \mathscr P$, and $h \in H$:
\[
\Pr_{S \sim (D\times h)^n}[\exists h' \in H: |\mathcal{E}_S(h') - \text{err}_{D\times h}(h')| > \varepsilon] \leq \delta.
\]
\end{definition}
Uniform estimation is the natural supervised analog of density estimation, estimating $err_{D\times h}(h')=d_D(h,h')$ from \textit{labeled} samples. We formalize this connection by showing the two are equivalent so long as the underlying class has UBME.\footnote{Note unlike the distribution-free case this is trivially false otherwise since uniform estimation implies UBME, but any triple $(D,X,H)$ has a trivial density estimator.}
\begin{theorem}\label{thm:TV-Estimation}
If $(\mathscr{P},X,H)$ has a uniform estimator on $n_{\text{UE}}(\varepsilon,\delta)$ samples, then $(\mathscr{P},X,H)$ has a density estimator satisfying
\[
n_{\text{DE}}(\varepsilon,\delta) \leq O\left( n_{\text{UE}}(\varepsilon',\delta') + \frac{\log (\Pi_H(n_{\text{UE}}(\varepsilon',\delta')))+\log(1/\delta)}{\varepsilon^2}\right)
\]
where $\varepsilon'=O(\varepsilon)$, $\delta'=O(\frac{\delta}{\Pi_H(n_{UE}(\varepsilon/4,\delta/2))})$. Conversely, if $(\mathscr{P},X,H)$ has a density estimator on $n_{\text{DE}}(\varepsilon,\delta)$ samples and has metric entropy $m(\varepsilon)$, then $(\mathscr{P},X,H)$ has a uniform estimator satisfying
\[
n_{\text{UE}}(\varepsilon,\delta) \leq 
O\left(n_{\text{DE}}(\varepsilon/4,\delta/2) + \frac{\log(m(\varepsilon/4))+\log(1/\delta)}{\varepsilon^2}\right).
\]
\end{theorem}
\Cref{thm:TV-Estimation} implicitly revolves around the fact that under UBME, both uniform and density estimation are equivalent to building a (bounded) uniform $\varepsilon$-cover $C$ and a corresponding \textit{covering map} $c: H \to C$ such that $\forall h \in H: d_D(c(h),h) \leq \varepsilon.$ This type of structure allows us to apply \cite{yatracos1985rates}'s trick extending empirical estimates on $C$ to good estimates on all of $H$. We now overview the constructions underlying \Cref{thm:TV-Estimation}, leaving proof details to \Cref{app:compare} (\Cref{app-thm:TV-Estimation}).
\paragraph{DE + UBME $\implies$ UE:} Running the UE outputs a distribution $D'$ s.t.\ $TV_{H \Delta H}(D',D)$ is small. UBME promises that $D'$ has a known cover $C$ and covering map $c$, which is in turn a covering map for $D$ as well (as in \Cref{obs:STV}). We can then define our estimators by drawing a labeled sample $S$, and extending the empirical error of $S$ on $C$ to all of $H$ via $\mathcal{E}_S(h) \coloneqq \text{err}_S(c(h))$.
\paragraph{UE $\implies$ DE:} \cite{hopkins2021realizable} showed it is possible to build a cover $C$ for any learnable class from unlabeled samples (see \Cref{app:compare}, \Cref{app-lem:covering}). We show that using the uniform estimator, it is also possible to construct $C$'s associated covering map.\footnote{Note this is not in general possible for any learnable class, as this would refute \Cref{thm:TV-not-necessary}.} Draw an unlabeled sample $T$, and run the uniform estimator across every possible labeling of $T$ by $C$ to obtain $\{\mathcal{E}_{(T,h(T))}\}_{h \in C}$. We define a covering map by sending $h\in H$ to its closest element in $C$ according to these estimates: $c(h) \coloneqq \text{argmin}_{h' \in C} \ \mathcal{E}_{(T,h'(T))}(h)$. Now draw an unlabeled sample $T$, define $\tilde{d}_T$ by extending empirical estimates on $C$ to $H$ as $\tilde{d}_{T'}(h,h') \coloneqq \hat{d}_{T'}(c(h),c(h'))$, and output any distribution $D' \in \mathscr{P}$ approximately satisfying all estimates.

It is left to examine the connection between density estimation and UBME. We show the latter actually directly implies weak density estimation.
\begin{proposition}\label[proposition]{prop:UBME}
Any class $(\mathscr P, X, H)$ with metric entropy $m(\varepsilon)$ has a WDE satisfying
\[
n_{\text{WTV}}(\varepsilon,\delta) \leq O\left(\frac{\log(m(\varepsilon/8)) + \log(1/\delta)}{\varepsilon^2}\right).
\]
\end{proposition}
\begin{proof}[Sketch]
We argue it is sufficient to simply output the empirical distance estimator on a large enough unlabeled sample $T$. UBME promises the existence of an (unknown) cover $C$ and covering map $c$. Let $G_{T,D}$ be the set of all $h\in H$ s.t. $d_D(h,c(h))$ is estimated to within $O(\varepsilon)$ error. The idea is to observe that the empirical distance of any two $h,h' \in G_{T,D}$ can be factorized through their distances to $c(h)$ and $c(h')$ \textit{without any knowledge of $c$}, since:
\begin{align*}
|\hat{d}_T(h,h') - d_D(h,h')| &\leq |\hat{d}_T(c(h),c(h')) - d_D(c(h),c(h'))| + O(\varepsilon).
\end{align*}
Thus as long as we draw a sufficient number of samples to estimate distances on $C$, just the existence of this cover is sufficient to imply $\hat{d}_T$ is a WDE. See \Cref{app:compare} (\Cref{app-prop:UBME}) for details.
\end{proof}
Note that while \Cref{prop:UBME} subsumes \Cref{obs:WTV} since any PAC-learnable class has UBME \citep{benedek1991learnability}, they give quantitatively different sample complexity guarantees. Combined with \cite{dudley1994metric}, \Cref{prop:UBME} also implies insufficiency of WDEs, but the approach is inherently lossy and cannot handle the `perfect' $\varepsilon=0$ case \Cref{thm:WTV} covers.
\section*{Acknowledgements}
SL was supported by a Simons Investigator Award ($\#$929894). DK was supported by by NSF Medium Award CCF-2107547 and NSF Award CCF-1553288 (CAREER).
\section{Detailed Proofs of \texorpdfstring{\Cref{sec:stv}}{Section Reference}}\label{app:stv}
In this section we study the relation between PAC Learning and intermediate density estimation. We start with the basic observation that (under UBME), intermediate estimation is sufficient to PAC-learn.
\begin{proposition}\label[proposition]{app-obs:STV}
Let $(\mathscr P, X, H)$ be any class with metric entropy $m(\varepsilon)$ and an intermediate density estimator on $n_{IDE}(\varepsilon,\delta)$ samples. Then $(\mathscr P, X, H)$ is PAC-learnable in 
\[
n_{\text{PAC}}(\varepsilon,\delta) \leq O\left( n_{IDE}(\varepsilon/4,\delta/4) + \frac{\log( m(\varepsilon/4)) + \log(1/\delta)}{\varepsilon}\right)
\]
samples.
\end{proposition}
\begin{proof}
Fix $D \in \mathscr{P}$ and $h\in H$. At a high level, we will use the IDE to find a small subset $C \subset H$ that almost certainly contains some $h'$ close to $h$, then learn $C$ directly via empirical risk minimization.

More formally, run the IDE on an unlabeled sample $T$ of size $n_{\text{IDE}}(\varepsilon/4,\delta/4)$ to get the approximate metric $\tilde{d}_T$ and set of well-estimated functions $G_T$. With probability at least $1-\delta/4$, all distances within $G_T$ are well-estimated:
\[
\forall g,g' \in G_T: |\tilde{d}_T(g,g') - d_D(g,g')| \leq \varepsilon/4.
\]
Assume this is the case. Because $G_T$ is known, we can find a distribution $D' \in \mathscr{P}$ which realizes these estimates on $G_T$ up to an error of $\varepsilon/4$, and therefore satisfies $TV_{G_T \Delta G_T}(D,D') \leq \varepsilon/2$.
Since $G_T \subset H$ and $(\mathscr P,H)$ has UBME, there exists an $\frac{\varepsilon}{4}$-cover $C_T$ of $G_T$ under $D'$ of size at most $m(\varepsilon/4)$. It is enough to argue that with probability at least $1-\delta/2$, there exists a hypothesis in $C_T$ with small true risk:
\[
\exists h' \in C_T: \text{err}_{D \times h}(h') \leq \frac{3\varepsilon}{4}.
\]
In this case, if we simply draw $O(\frac{\log|C_T|+\log(1/\delta)}{\varepsilon})$ fresh labeled samples and output the hypothesis with lowest empirical error, a Chernoff and Union bound imply the resulting hypothesis has error at most $\varepsilon$ with probability at least $1-\delta/4$. By a union bound, all steps in this process succeed simultaneously except with probability $\delta$, which gives the desired learner.

It is left to show a good hypothesis exists in $C_T$ with high probability. To see this, observe that if $TV_{G_T \Delta G_T}(D,D') \leq \varepsilon/2$ (which we argued occurs except with probability $\delta/4$), $C_T$ is a $3\varepsilon/4$-cover for $G_T$ under the true distribution $D$. This follows simply from noting that for any $g,g' \in G_T$ we have:
\begin{align*}
    d_D(g,g') &= d_D(g,g') - d_{D'}(g,g') + d_{D'}(g,g')\\
    &\leq TV_{G_T \Delta G_T}(D,D') + d_{D'}(g,g'),
\end{align*}
so distances in the cover under $D$ change by at most $\varepsilon/2$ from their value under $D'$. On the other hand, we are promised by the IDE that $h \in G_T$ with probability at least $1-\delta/4$. If both events occur, then $h$ is covered by $C_T$, meaning there exists $h' \in C_T$ such that $d_D(h,h') \leq 3\varepsilon/4$ as desired.
\end{proof}

We note that this observation generalizes the result of \cite{kulkarni1997learning} that UBME is sufficient for PAC learning under finitely coverable distribution families as these have simple density estimators. With this out of the way we move to the more involved direction, showing intermediate density estimation fails necessity.
\begin{theorem}\label{app-thm:TV-not-necessary}
 There exists a class $(\mathscr{P},X,H)$ which is PAC-learnable in
 \[
 n_{\text{PAC}}(\varepsilon,\delta) \leq \tilde{O}\left(\frac{\log(1/\delta)}{\varepsilon^2}\right)
 \]
 samples, but has no intermediate density estimator.
\end{theorem}


We first overview the construction, which is based on learning dictators on the noisy hypercube. Let $X_n=\{0,1\}^n$ denote the $n$-dimensional hypercube, and for $x \in X_n$ write $x_i$ to denote the value of the $i$th coordinate of $x$. Our hypothesis class $H_n \coloneqq \{ \mathbf{1}_i : i \in [n]\}$ will consist of the set of dictators on the cube:
\[
\mathbf{1}_i(x) = \begin{cases}
1 & \text{if } x_i=1\\
0 & \text{else}.
\end{cases}
\]
Finally, for every $x \in \{0,1\}^n$ and $\rho>0$, let $D_x^\rho$ denote the distribution that samples $y \in \{0,1\}^n$ by independently setting each bit $y_i$ to $x_i$ with probability $1-\rho$, and otherwise to $1-x_i$.
We equivalently think of $y$ as being generated by applying independent noise to each bit of $x$, hence the relation to the noisy cube.

Let $\mathscr{P}_n^\rho \coloneqq \{ D^\rho_x\}_{x \in X_n}$ be the family of all such distributions. We argue that $(\mathscr{P}_n^\rho, X_n, H_n)$ is learnable up to error $\rho$ in $O_\delta(1)$ samples, while any IDE requires $\Omega_\rho(\log(n))$ samples. The result then follows from taking an appropriate infinite disjoint union with $n \to \infty$ and $\rho \to 0$.

We start with the easier of the two, learnability, which simply stems from the fact that our class is `almost constant,' and can therefore be easily approximated by a constant function.
\begin{lemma}
For any $\frac{1}{6} \geq \rho \geq 0$, $(\mathscr{P}_n^\rho, X_n, H_n)$ is $(\rho,\delta)$-PAC-learnable in $O(\log(1/\delta))$ samples.
\end{lemma}
\begin{proof}
Observe that by construction, no matter the distribution and hypothesis chosen by the adversary our class is almost constant in the following sense:
\[
\forall D^\rho_x \in \mathscr{P}_n^\rho, h \in H_n, \exists z \in \{0,1\}: \Pr_{y \sim D^\rho_x}[h(y) = z] = 1-\rho.
\]
This follows from the fact that for all $x \in \{0,1\}^n$ and $i \in [n]$:
\[
\Pr_{y \sim D^\rho_x}[y_i=x_i] = 1-\rho,
\]
and every fixed $h\in H_n$ depends only on the value of a single coordinate.

This suggests the following strategy: after $O(\log\frac{1}{\delta})$ samples, simply output the majority label as a constant function.\footnote{We note this can be made `proper' simply by adding the all $0$'s and all $1$'s function to the class. This only makes density estimation harder, so it does not effect the following lower bound.} For any $\rho$ bounded away from $\frac{1}{2}$, a Chernoff bound promises that the learner picks the true majority label with probability at least $1-\delta$. Since the majority label has mass $1-\rho$, this gives the desired $(\rho,\delta)$-PAC-learner.
\end{proof}

Density estimation is somewhat more involved. At a high level, our lower bound stems from the fact that the intermediate estimator must `declare' a set of good elements, but the independent noise across coordinates inherent in $\mathscr{P}_n^\rho$ makes this challenging without a large number of samples.
\begin{proposition}
\label[proposition]{app-prop:finite-lower}
 For all $n > 2$ and $\frac{1}{6} > \rho > \frac{12\log(1/\rho)}{\log(n)} $,  any IDE for $(\mathscr{P}_n^\rho, X_n, H_n)$ uses at least
 \[
 n_{\text{IDE}}(1/6,1/8) \geq \Omega\left(\frac{\log(  n)}{\log(1/\rho)}\right)
 \]
samples.
\end{proposition}
The crux of \Cref{app-prop:finite-lower} lies in the fact that, given a few samples from a random $D_x^\rho$, while it may be easy to output some $x'$ close to $x$ with high probability it is impossible to do so \textit{confidently}. In other words, the learner should not be able to identify a large set of coordinates on which it knows it is correct due to the inherent noise in $D_x^\rho$.\footnote{Algorithms with this type of guarantee are well-studied in the learning literature under a variety of names such as ``Reliable and Probably Useful'' \citep{rivest1988learning},  ``Confident'' \citep{kane2017active}, or ``Knows What It Knows'' \citep{li2001improved}.} We formalize this connection to confident learning via the following reduction.
\begin{lemma}\label[lemma]{app-lemma:conf}
For any $t\in\mathbb{N}$ and $\frac{1}{6} > \rho > 0$, if $(\mathscr{P}_n^\rho, X_n, H_n)$ has a $(1/6,1/8)$-IDE on $t$ samples, then there exists a (randomized) algorithm $\mathcal{A}: X^t \to \{0,1,\bot\}$ with the following guarantee. Given $t$ samples drawn from any $D^\rho_x \in \mathscr{P}_n^\rho$, with probability at least $1/8$, $\mathcal{A}$ outputs a string $\hat{x} \in \{0,1,\bot\}^n$ such that:
\begin{enumerate}
    \item $\hat{x}$ is `$\bot$' on at most $n/2$ coordinates,
    \item $\hat{x}$ agrees with $x$ otherwise. 
\end{enumerate}
We call any such $\mathcal{A}$ a $(1/2,1/8)$-confident learner for $\mathscr{P}_n^\rho$.
\end{lemma}
\begin{proof}
Consider the following algorithm for generating $\hat{x}$. Draw a sample $T$ of $t$ elements from $D^\rho_x$, and run the IDE to get $\tilde{d}_T$ and $G_T$. Pick any $\mathbf{1}_i \in G_T$ (if no such hypothesis exists output the all `$\bot$' string), and randomly guess the value of its corresponding coordinate in $x$:
\[
\hat{x}_i \sim \text{Ber}(1/2).
\]
For every other $\mathbf{1}_j \in G_T$, define $\hat{x}_j$ by its approximate distance to $\mathbf{1}_i$:
\[
\hat{x}_j = 
\begin{cases}
 \hat{x}_i & \text{if $d_{\tilde{T}}(\mathbf{1}_i,\mathbf{1}_j) < 1/2$}\\
 1-\hat{x}_i & \text{otherwise}.
\end{cases}
\]
Output $``\bot"$ in all other circumstances. 

At a high level, this generation procedure works because $\hat{x}_i$ is correct half the time, and the true value of any $x_j$ is reflected in its indicator's distance $d_{D_x^\rho}(\mathbf{1}_i,\mathbf{1}_j)$. More formally, observe that whenever $x_i=x_j$, the distance between the indicators of $i$ and $j$ is 
\[
d_{D_x^\rho}(\mathbf{1}_i,\mathbf{1}_j) \leq 2\rho < 1/3,
\]
whereas whenever $x_i \neq x_j$, we have
\[
d_{D_x^\rho}(\mathbf{1}_i,\mathbf{1}_j) \geq 1-2\rho > 2/3.
\]
With probability at least $7/8$, $\tilde{d}_T$ is $1/6$-accurate on $G_T$, meaning we can distinguish between these two cases and correctly identify whether $x_i=x_j$. Then as long as $\hat{x}_i=x_i$ (which occurs with probability $1/2$), the above procedure correctly labels all indices whose indicators are in $G_T$. Thus as long as $|G_T| \geq \frac{n}{2}$, our output $\hat{x}$ satisfies the desired properties. Since every hypothesis lies in $G_T$ with probability at least $7/8$, this occurs with probability at least $3/4$ by Markov. Union bounding over all events, our algorithm succeeds with probability at least $1/8$ as desired.

\end{proof}

We are now ready to prove \Cref{app-prop:finite-lower} by showing $\mathscr{P}_n^\rho$ has no confident learner.
\begin{proof}[Proof of \Cref{app-prop:finite-lower}]
It is enough to prove $\mathscr{P}_n^\rho$ has no $(1/2,1/8)$-confident learner on $t=\frac{2\log(n)}{5\log(1/\rho)}$ samples. By Yao's minimax principle, it is enough to show there is a distribution over $\mathscr{P}_n^\rho$ such that any deterministic algorithm on $t$ samples fails with probability at least $7/8$. We will use the uniform distribution over $\mathscr{P}^\rho_n$, which corresponds to choosing $D_x^\rho$ for a uniformly random $x \in \{0,1\}^n$.

To simplify the proof, we first argue it is sufficient to consider learners that adhere to the following `noisy majority rule', that is mappings $\mathcal{A}$ such that:
\begin{enumerate}
    \item $\mathcal{A}$ outputs $\bot$ on the noisiest\footnote{I.e.\ coordinates with empirical density closest to $1/2$, ties broken arbitrarily.} $\frac{n}{2}$ coordinates,
    \item $\mathcal{A}$ outputs the majority bit otherwise.
\end{enumerate}
To see this, we argue any learning rule can be updated to satisfy the above constraints in a way that only improves its success probability. For any $x$, let $\mathcal{W}_x \subset \{0,1,\bot\}^n$ denote the set of `winning' strings, that is those that label at least half the coordinates and always agree with $x$. Thinking of the choice of $x$ and $T$ as a jointly distributed variable, expand the success probability of $A$ as:
\begin{align*}
    \Pr_{x \sim \{0,1\}^n, T \sim (D_x^\rho)^{t}}[A(T) \in \mathcal{W}_x] = \sum\limits_{T_0 \in X_n^{t}} \Pr_{x,T}[T=T_0]\Pr_{x,T}[A(T_0) \in \mathcal{W}_x | T =T_0].
\end{align*}
Thus we may make any update to the algorithm's output $A(T_0)$ which cannot decrease the conditional success probability $\Pr[A(T_0) \in \mathcal{W}_x | T =T_0]$. We argue that we can always locally update the output of the algorithm to one following noisy majority without decreasing the success probability.

First, note we may assume without loss of generality that $A(T_0)$ labels (answers `$1$' or `$0$' on) exactly half the coordinates. If $A(T_0)$ labels more than half, replacing labels with $\bot$ up to $\frac{n}{2}$ can only increase the probability $A(T_0) \in \mathcal{W}_x$, while if $A(T_0)$ labels fewer than half, we automatically have $A(T_0) \notin \mathcal{W}_x$ so replacing $\bot$ even with arbitrary labels up to $n/2$ also only increases this probability.
With this in mind, let $\mathcal{L}_{T_0}$ denote the set of $n/2$ labeled coordinates in $A(T_0)$. Notice that even after conditioning on $T=T_0$, the coordinates of $x$ in the posterior remain independent random variables, so we can expand the conditional success probability as:
\begin{align*}
\Pr_{x,T}[A(T_0) \in \mathcal{W}_x | T =T_0] &=\Pr_{x,T}\left[\bigwedge_{i \in \mathcal{L}_{T_0}} \{A(T_0)_i=x_i\} \middle | T=T_0\right]\\
&= \prod_{i \in \mathcal{L}_{T_0}}\Pr_{x,T}[A(T_0)_i=x_i|T=T_0].
\end{align*}
The posterior probability of any particular $x_i$ is maximized by the coordinate's majority value $\underset{y \in T_0}{\text{maj}}\{y_i\}$, so we can always set $A(T_0)_i$ to the majority value without decreasing the overall success probability.  

To ensure we only label the noisiest coordinates, observe further that the posterior probability of the majority value increases the further its empirical mean $\mu^{T_0}_i = \frac{1}{t}\sum\limits_{y \in T_0} y_i$ is from $1/2$. Since all the coordinates are identically distributed conditioned on having the same empirical mean by symmetry, any coordinate pair $(i,j)$ satisfying
\begin{enumerate}
    \item $A(T_0)$ labels $j$ but not $i$:
    \[
    A(T_0)_i=\bot, \ \ A(T_0)_j \in \{0,1\}
    \]
    \item Coordinate $j$ is noisier than $i$:
    \[
    \left|\mu^T_j - \frac{1}{2}\right| < \left|\mu^T_i - \frac{1}{2}\right|
    \]
\end{enumerate}
can be swapped to setting $A(T_0)_i$ to its majority label and $A(T_0)_j$ to $\bot$ without decreasing the success probability. Iteratively applying these updates leads to satisfying noisy majority.

It is left to show that any algorithm following the noisy majority rule fails with high probability. To see this, observe that unless $t$ is sufficiently large, for any choice of $D_x^\rho$ the following two events almost certainly occur:
\begin{enumerate}
    \item There is a coordinate whose value is flipped by noise in every example:
    \[
    \exists i, \forall y \in T: y_i \neq x_i
    \]
    \item There are at least $n/2$ coordinates with empirical density $0 < \mu^T_i < 1$.
\end{enumerate}
Noisy majority fails on any such input by construction, since it outputs majority label of a flipped coordinate. The probability there exists at least one coordinate that is flipped in each of $t$ samples is $1-(1-\rho^{t})^n \geq 15/16$ by our choice of constants. On the other hand, the expected number of coordinates with empirical density $0$ or $1$ is at most $2n(1-\rho)^{t}\leq 2ne^{-\rho t} \leq \frac{n}{32}$ by our assumptions on $\rho$. By Markov's inequality and a union bound both conditions hold with probability greater than $7/8$, so the algorithm cannot succeed with more than $1/8$ probability as desired.
\end{proof}
Finally we argue that taking the disjoint union of infinitely many such instances gives the full result.
\begin{proof}[Proof of \Cref{app-thm:TV-not-necessary}]
Consider the sequence $\{\rho_n\}_{n \in \mathbb{N}}$ where $\rho_n = \Theta\left(\frac{\log\log (n)}{\log (n)} \right)$. For the correct choice of constant (and large enough $n$), note that the following two conditions hold:
\begin{enumerate}
    \item $\rho_n$ satisfies the conditions of \Cref{app-prop:finite-lower}: 
    $\frac{1}{6} > \rho > \frac{12\log(1/\rho)}{\log (n) }$
    \item $t_n \coloneqq \frac{2\log (n)}{5\log (1/\rho_n)}$ is increasing.
\end{enumerate}
Consider the disjoint union of classes $\bigcup_{n \in \mathbb{N}}\mathcal{H}_n^{\rho_n}$. We first prove this class is PAC-learnable. Given any $\varepsilon,\delta > 0$, we consider two cases. In the first case, the adversary selects a distribution/hypothesis pair indexed by $n$ such that $\rho_n \leq \varepsilon$. In this case, PAC-learnability follows exactly the same argument as in \Cref{app-prop:finite-lower}. Otherwise, the adversary selects $n$ such that $\rho_n > \varepsilon$. By definition of $\rho_n$ it must therefore be the case that $n = |\mathcal{H}_n^{\rho_n}|\leq O\left(\left(\frac{1}{\varepsilon}\right)^{1/\varepsilon}\right)$, so standard arguments show that outputting any empirical risk minimizer over $O(\frac{\log\frac{1}{\delta}}{\varepsilon}+\frac{\log\frac{1}{\varepsilon}}{\varepsilon^2})$ samples is a PAC-learner. Finally note these cases can be distinguished by the learner in one sample, since each instance space is `marked' by its corresponding index (value of $n$).

On the other hand, it is easy to see there cannot be any bounded $(1/6,1/8)$-IDE for this class. For any fixed sample size $t$, the adversary can always choose $n$ such that $t_n > t$ (since $\{t_n\}$ is increasing), and corresponding $\rho_n$ satisfying the conditions of \Cref{app-prop:finite-lower}. Since $\mathcal{H}_n^{\rho_n}$ then has no $(1/6,1/8)$-IED on $t$ samples, this completes the proof.
\end{proof}
We note it is possible to improve the sample complexity of the learner to $\tilde{O}(\frac{1}{\varepsilon})$ if one only wishes to refute density estimation (or equivalently uniform estimation, see \Cref{app:compare}), since the assumption $\rho > \Omega(\frac{\log(1/\rho)}{\log(n)})$ can be removed in this simpler case.

\section{Detailed Proofs of \texorpdfstring{\Cref{sec:wtv}}{Section Reference}}\label{app:wtv}
In this section, we show that weak density estimation is necessary but not sufficient to PAC-learn. We start with necessity.
\begin{proposition}\label[proposition]{app-obs:WTV}
Let $(\mathscr P, X, H)$ be a class that is PAC-learnable in $n_{\text{PAC}}(\varepsilon,\delta)$ samples. Then $(\mathscr P, X, H)$ has a weak density estimator on
\[
n_{\text{WDE}}(\varepsilon,\delta) \leq O\left(n_{\text{PAC}}(\varepsilon/3,\delta) + \frac{\log(\Pi_H(n_{\text{PAC}}(\varepsilon/3,\delta)))+\log(1/\delta)}{\varepsilon^2}\right)
\]
samples.
\end{proposition}
\begin{proof}
The proof appeals to the non-uniform covering technique of \cite{hopkins2021realizable}. In particular, draw an unlabeled sample $T$ of size $n_{\text{PAC}}(\varepsilon/3,\delta)$ and run a PAC-learner $\mathcal{L}$ across all possible labelings of $T$ to generate the set
\[
C_T \coloneqq \{ \mathcal{L}(T,h(T)):~h \in H\}.
\]
HKLM \cite[Lemma 5.2]{hopkins2021realizable} observed $C_T$ non-uniformly covers $H$ in the following sense:
\[
\forall h \in H: \Pr_T[\exists h' \in C_T: d_D(h,h') < \varepsilon/3] > 1-\delta.
\]
Our WDE relies on a refinement of this fact implicit in their proof. Define a mapping $c_T: H \to C_T$ by 
\[
c_T(h) \coloneqq \mathcal{L}(T,h(T)),
\]
and observe that since $\mathcal{L}$ is an $(\varepsilon/3,\delta)$-PAC-learner, for any \textit{fixed} $h \in H$, $c_T(h)$ is close to $h$ with high probability:
\[
\forall h \in H: \Pr_T[d_{D}(h,c_T(h)) > \varepsilon/3] < \delta.
\]
To build our WDE, this means we can simply estimate distances on $C_T$, and extend to $H$ via $c_T$.

Formally, draw a fresh sample $T'$ of $O(\frac{\log|C_T| + \log(1/\delta)}{\varepsilon^2})$ unlabeled examples. By Chernoff and Union bounds, the empirical estimator $\hat{d}_T(h,h') \coloneqq \frac{1}{|T|}\sum\limits_{x \in T} \mathbf{1}\{h(x) \neq h'(x)\}$ is close to the true distance metric $d_D$ with high probability across elements  of $C_T$: 
\[
\Pr_{T'}\left[\exists h,h' \in C_T: |\hat{d}_{T'}(h,h') - d_D(h,h')| > \varepsilon/3\right] < \delta.
\]
We can extend $\hat{d}_{T'}$ to an approximate estimator over all of $H$ by defining for all $h,h' \in H$:
\[
\tilde{d}_{T, T'}(h,h') \coloneqq \hat{d}_{T'}(c_T(h),c_T(h')).
\]
We argue this approximate metric is accurate across all hypotheses which are close to their image under $c_T$. With this in mind, define
\[
G_{(T, T'),D} \coloneqq \{g \in H:~ d_D(g, c_T(g)) \leq \varepsilon/3\}.
\]
Assuming our empirical estimates on $C_T$ are good, for any $g,g' \in G_{(T, T'),D}$ we can write:
\begin{align*}
|\tilde{d}_{T, T'}(g,g') - d_D(g,g')| &= |\hat{d}_{T'}(c_T(g),c_T(g')) - d_D(g,g')|\\
&\leq |\hat{d}_{T'}(c_T(g),c_T(g')) - d_D(c_T(g),c_T(g'))| + d_D(g,c_T(g)) + d_D(g',c_T(g'))\\
&\leq \varepsilon,
\end{align*}
where the first inequality follows from noting that $d_D(g,g')=\Pr_{x \sim D}[g(x) \neq g'(x)]$ and $d_D(c_T(g),c_T(g'))=\Pr_{x \sim D}[c_T(g)(x) \neq c_T(g')(x)]$ can only differ on elements $x$ where $g(x) \neq c_T(g)(x)$ or $g'(x) \neq c_T(g')(x)$.

Putting everything together, we have that each $h\in H$ lies in $G_{(T, T'), D}$ with probability at least $1-\delta$, and that all estimates in $G_{(T, T'),D}$ are $\varepsilon$-accurate with probability at least $1-\delta$ giving the desired WDE.
\end{proof}

Given the closeness of weak and intermediate density estimation, one might hope there is some way to amplify weak estimation to a PAC-learner. Unfortunately, the main barrier in modifying the proof (lacking knowledge of the well-estimated set) turns out to be inherent, and we are able to give a very strong refutation of WDE's sufficiency: even a \textit{perfect} weak estimator (i.e.\ one with error $\varepsilon=0$) isn't sufficient to PAC-learn. 
\begin{theorem}\label{app-thm:WTV}
There exists a class $(\mathscr{P}, X, H)$ with metric entropy $m(\varepsilon) \leq O(2^{1/\varepsilon})$ and a WDE with
\[
n_{\text{WDE}}(0,\delta) \leq O\left(\log(1/\delta)\right)
\]
samples, but is not PAC-learnable.
\end{theorem}
We first describe the construction, which is similar in spirit to the UBME counterexample of \cite{dudley1994metric}. Let $X = \{0,1,2\}^\mathbb{N}$, and $\{\epsilon_i\}_{i \in \mathbb{N}}$ be the sequence $\varepsilon_i \coloneqq \frac{1}{\log(i+255)}$. For each $i \in \mathbb{N}$, let $f_i$ denote the indicator that $x_i=0$:
\[
f_i(x) = \begin{cases}
1 & x_i =0 \\
0 & \text{otherwise,}
\end{cases}
\]
and define $D_i$ to be the distribution over $x \in X$ where the $i$th coordinate is chosen from $\{0,1\}$ uniformly at random
\[
\Pr[x_i=0]=\Pr[x_i=1]=\frac{1}{2},
\] 
and all remaining coordinates $j \neq i$ are drawn independently from the categorical distribution 
\[
\Pr[x_j=0]=\Pr[x_j=1]=\epsilon_j,~\text{and } \Pr[x_j=2] = 1-2\epsilon_j.
\]
Finally, define $\mathscr P = \{D_i\}_{i \in \mathbb{N}}$ and $H = \{f_i\}_{i \in \mathbb{N}}$. We first argue that $(\mathscr P, X, H)$ has a WDE.
\begin{proposition}
$(\mathscr P, X, H)$ has a weak density estimator on
\[
n_{\text{WDE}}(0,\delta) \leq O(\log(1/\delta))
\]
samples.
\end{proposition}
\begin{proof}
The idea is to observe that for any fixed hypotheses $f_j,f_k$, the distance $d_{D_i}(f_j,f_k)$ depends only on whether $i\in \{j,k\}$. Further, fixing $D_i$ for any particular $\ell$ it is easy to check whether $\ell=i$ by looking at the empirical mean of $f_\ell$. This allows us to define an estimator just based on empirical means which is correct across any two well-estimated hypotheses. 

Formally, observe that any distribution $D_i$ and hypotheses $f_j, f_k$ where $i,j,k$ are distinct satisfy: 
\begin{align*}
    d_{D_i}(f_j, f_k) &= \Pr_{x \sim D_i} \left[(x_j =0 \land x_k \neq 0) \lor (x_k = 0 \land x_j \neq 0)\right]\\
    &= \epsilon_j (1- \epsilon_k) + \epsilon_k (1- \epsilon_j)
\end{align*} 
On the other hand, hypotheses $f_i, f_j$ for $i \neq j$ satisfy: \begin{align*}
    d_{D_i}(f_i, f_j) &= \Pr_{x \sim D_i} \left[(x_i =0 \land x_j \neq 0) \lor (x_j =0 \land x_i \neq 0)\right]\\
    &= \frac{1}{2} (1- \epsilon_j) +  \frac{\epsilon_j}{2}\\
    &= 1/2.
\end{align*}
Given a sample $T$ of size $O(\log(1/\delta))$, denote the empirical density of $f_j$ under $T$ by $\mu_j^T$. We define the following approximate distance estimate on $H$:
\[
\tilde{d}_T(f_j,f_k) = 
\begin{cases}
\epsilon_j (1- \epsilon_k) + \epsilon_k (1- \epsilon_j) & \text{if } \mu_j^T,\mu_k^T < 1/3\\
1/2 & \text{otherwise.}
\end{cases}
\]
We argue that $\tilde{d}_T$ satisfies the conditions of a $(0,\delta)$-WDE. Let $G_{T,D}$ be the set of functions with good empirical averages:
\[
G_{T,D} \coloneqq \left\{ f_i: \left| \underset{D}{\mathbb{E}}[f_i] - \mu_i^T\right| < 1/6 \right\}.
\]
Fix any $D_i \in \mathscr{P}$ and $f_j \in H$. First, observe that by a Chernoff bound, $f_j \in G_{T,D_i}$ with probability at least $1-\delta$. Second, notice that since $\mu_i\coloneqq \mathbb{E}_{D_i}[f_i]=1/2$ and any other $\mu_j \leq 1/8$, for all functions $f_\ell \in G_{T,D_i}$, we are assured that $\mu_\ell^T < 1/3$ if and only if $\ell \neq i$. Since this condition exactly determines the distances between hypotheses, $\tilde{d}_T$ has zero error across $G_{T,D_i}$ as desired.
\end{proof}
Next, we exhibit a small cover for every distribution in $(\mathscr P, X, H)$ based on the fact that for any fixed $D_i$, most hypotheses are close to $0$.
\begin{proposition}\label[proposition]{app-prop:me}
$(\mathscr P, X, H)$ has metric entropy at most
\[
m(\varepsilon) \leq O(2^{1/\varepsilon}).
\]
\end{proposition}
\begin{proof}
Observe that for any fixed $D_i$ and $\varepsilon>0$, the following set of hypotheses forms an $\varepsilon$-cover:
\begin{enumerate}
    \item All hypotheses $f_j$ for which $\underset{D_i}{\mathbb{E}}[f_j] \geq \varepsilon/2$,
    \item The first hypothesis $f_k$ such that $\underset{D_i}{\mathbb{E}}[f_k] \leq \varepsilon/2$,
\end{enumerate}
since for any $k' > k$, $k' \ne i$ we have 
\[
d_{D_i}(f_k,f_{k'}) \leq \epsilon_k \leq \varepsilon
\]
by construction. Moreover, by our choice of $\{\epsilon_j\}_{j \in \mathbb{N}}$, we have $k \leq O(2^{1/\varepsilon})$ which gives the desired bound on metric  entropy.
\end{proof}
Finally, we argue our class is not PAC-learnable. The proof has a similar flavor to our lower bound against intermediate density estimation: the learner cannot distinguish between ground truth and hypotheses that happen to look consistent due to a small amount of noise (here generated by the $\epsilon_i$'s).
\begin{proposition}\label[proposition]{app:no-PAC}
$(\mathscr{P},X,H)$ is not PAC-learnable.
\end{proposition}
\begin{proof}
By Yao's minimax principle it suffices to find  for every $t \in \mathbb{N}$ a randomized strategy for the adversary over which any deterministic learner on $t$ samples fails with at least constant probability. For each $n \in \mathbb{N}$, consider the adversary strategy that selects from the set $\{D_i,f_i\}_{i \in [n]}$ with probability:
\[
\Pr[(D_i,f_i)] \propto \epsilon_i^t.
\]
Denote this adversary distribution by $\mathcal{A}$ and for each $j \in [n]$, call a sample $S$ \textit{consistent} with $(D_j,f_j)$ if for all $(x,y) \in S$, we have $x \in \text{supp}(D_j)$ and $f_j(x)=y$. For every $W \subset [n]$, let $E_W$ denote the event that the learner's sample $S$ is consistent with $(D_j,f_j)$ for all $j \in W$ and inconsistent for every $j \in [n] \setminus W$. The idea is to argue that when $S$ is consistent with multiple coordinates, the learner cannot distinguish the ground truth and fails with constant probability, and further that this almost always occurs for large enough $n$. More formally, we will show the following claims:
\begin{enumerate}
    \item If $E_W$ holds for any $|W|>1$, the learner must have constant error with probability $1/2$,
    \item With probability at least $1/2$, $E_W$ holds for some $|W|>1$.
\end{enumerate}
Since the learner's probability of success can be broken into conditional probabilities over $E_W$ (which exactly partition the sample space), the result then follows.

To prove the first claim, we show that, conditioned on $E_W$, the posterior probability of any adversary choice $j \in W$ is uniform. To see this, first observe that fixing the choice of $(D_j,f_j)$, a coordinate $i \neq j$ of our sample $S$ is consistent if and only if every example $(x,y) \in S$ satisfies $x_i=x_j$, which occurs with probability $\epsilon_i^t$. Then by Bayes' rule for any $j \in W$ we can write:
\begin{align*}
    \Pr_{\mathcal{A},S}[(D_j,f_j) | E_W] &\propto \Pr_{\mathcal{A},S}[E_W | (D_j,f_j)]\Pr_{\mathcal{A}}[(D_j,f_j)]\\
    &\propto \left(\prod\limits_{i \notin W}\left(1-\epsilon_i^t\right)\right) \cdot \left(\prod_{i \in W \setminus \{j\}} \epsilon_i^t \right)\cdot \epsilon_j^t\\
    &= \left(\prod\limits_{i \notin W}\left(1-\epsilon_i^t\right)\right) \cdot \left(\prod_{i \in W} \epsilon_i^t \right)
\end{align*}
which is independent of $j$. 

We want to show that conditioned on $E_W$ for $|W|>1$, any learner has constant error with probability at least $1/2$. By construction, it is enough to argue that any learner fails to select the adversary's choice $f_i$ itself with probability at least $1/2$ (since $f_i$ has average $1/2$ and all other options have average at most $1/4$, any other choice incurs constant error). Since the posterior over possible ground truths $f_j$ is uniform for $j \in W$, any strategy correctly outputs $f_i$ with probability at most $\frac{1}{|W|} \leq \frac{1}{2}$, which gives the desired result.

It is left to show that some $E_W$ holds for $|W|>1$ with high probability. Given a fixed choice of $D_i$, observe the probability $|W|=1$ is exactly
\begin{align*}
\Pr[E_W: |W|=1] &= \prod\limits_{j \in [n] \setminus \{i\}}\left(1-\epsilon_j^t\right)\\
&\leq \prod\limits_{j=1}^{n-1}\left(1-\epsilon_j^t\right)\\
&\leq e^{-\sum\limits_{j=1}^{n-1}\epsilon_j^t}.
\end{align*}
Since we chose $\epsilon_j = \frac{1}{\log(j+255)}$ to decay sufficiently slowly, the sum in the exponent diverges. Thus for any fixed $t$ there is a sufficiently large $n$ for which $\Pr[E_W: |W|=1] \leq 1/2$ which completes the proof.
\end{proof}

\section{Detailed Proofs of \texorpdfstring{\Cref{sec:compare}}{Section Reference}}\label{app:compare}
In this section, we give a detailed comparison between our notions of density estimation and classical notions in learning theory such as uniform estimation, uniform convergence, and UBME. This allows us to give a direct comparison of our new conditions to prior work on the distribution-family model.
\subsection{Density vs. Uniform Estimation}\label{sec:UE}
We first discuss the close connection between density estimation and uniform estimation. Intuitively, it is clear the notions are intimately related. In fact, they essentially only differ in their quantification: the former promises that with a large enough sample from any fixed $D \in \mathscr{P}$, we can estimate $d_D(h, h')$ for all pairs $h,h' \in H$, while the latter promises that with a large enough sample from any fixed \textit{pair} $(D,h)$, we can estimate $d_D(h,h')$ for all $h' \in H$. 

With this in mind, one might hope to show density and uniform estimation are equivalent. In general this cannot be true since uniform estimation implies the class has uniformly bounded metric entropy, while density estimation does not (any fixed-distribution class has a trivial density estimator!). We show this is the only barrier: assuming UBME, density and uniform estimation are indeed equivalent. 
\begin{theorem}\label{app-thm:TV-Estimation}
If $(\mathscr{P},X,H)$ has a uniform estimator on $n_{\text{UE}}(\varepsilon,\delta)$ samples, then it has a density estimator on
\[
n_{\text{DE}}(\varepsilon,\delta) \leq O\left( n_{\text{UE}}(\varepsilon',\delta') + \frac{\log (\Pi_H(n_{\text{UE}}(\varepsilon',\delta')))+\log(1/\delta)}{\varepsilon^2}\right)
\]
samples, where $\varepsilon'=O(\varepsilon)$, $\delta'=O(\frac{\delta}{\Pi_H(n_{UE}(\varepsilon/4,\delta/2))})$.

Conversely, if $(\mathscr{P},X,H)$ has a density estimator on $n_{\text{DE}}(\varepsilon,\delta)$ samples and has metric entropy $m(\varepsilon)$, then it has a uniform estimator on 
\[
n_{\text{UE}}(\varepsilon,\delta) \leq 
O\left(n_{\text{DE}}(\varepsilon/4,\delta/2) + \frac{\log(m(\varepsilon/4))+\log(1/\delta)}{\varepsilon^2}\right)
\]
samples.
\end{theorem}
The proof of \Cref{app-thm:TV-Estimation} revolves (implicitly) around the fact that under UBME, both density and uniform estimation are equivalent to the ability to build a (bounded) uniform $\varepsilon$-cover $C$ and a corresponding \textit{covering map}, that is a function $c: H \to C$ such that:
\[
\forall h \in H: d_D(c(h),h) \leq \varepsilon,
\]
where $D$ is the adversary's choice of distribution. We call $(C,c)$ a \textit{covering-map pair}. This type of structure is useful since it allows us to apply \cite{yatracos1985rates}'s trick  of extending empirical estimates on $C$ (which is finite), to good estimates on all of $H$ via the covering map. With this in mind we prove the backward direction of \Cref{app-thm:TV-Estimation} which follows similarly to \Cref{app-thm:WTV}, replacing the non-uniform cover with a true covering-map pair.
\begin{proof}[Proof of \Cref{app-thm:TV-Estimation}: DE+UBME $\implies$ UE]
Draw a sample $T$ of size $n_{DE}(\varepsilon/4,\delta/2)$ and run the density estimator to get a distribution $D' \in \mathscr{P}$ such that
\[
\Pr_T[TV_{H \Delta H}(D,D') > \varepsilon/4] < \delta/2.
\]
Since $(\mathscr{P},X,H)$ has bounded metric entropy, the distribution $D'$ has a bounded $(\varepsilon/4)$-covering-map pair $(C_{D'},c_{D'})$ with $|C_{D'}| \leq m(\varepsilon/4)$. If it is truly the case that $TV_{H \Delta H}(D',D) \leq \varepsilon/4$, then $(C_{D'},c_{D'})$ is also an $(\varepsilon/2)$-covering-map pair for $(D,X,H)$. To build a uniform estimator, we lift empirical estimates from $C_{D'}$ to all of $H$ using the covering map $c_{D'}$.

More formally, draw a sample $S$ of $O(\frac{\log|C_{D'}|+\log(1/\delta)}{\varepsilon^2})$ fresh labeled examples to estimate the error of every hypothesis in $C_{D'}$. We define our estimator over $H$ as
\[
\mathcal{E}_S(h) \coloneqq \text{err}_S(c_{D'}(h)).
\]
So long as $(C_{D'},c_{D'})$ is truly an $(\varepsilon/2)$-covering-map pair under $D$, the error of $h$ and $c(h)$ can differ by at most $\varepsilon/2$. Thus if the empirical errors on $C_{D'}$ are accurate up to $\varepsilon/2$, $\mathcal{E}_S$ is within $\varepsilon$ of the true risk on all hypotheses by construction. By a Chernoff and Union bound this latter guarantee also holds with probability at least $1-\delta/2$, which completes the result.
\end{proof}
To prove the forward direction of \Cref{app-thm:TV-Estimation}, we will need the following related lemma of \cite{hopkins2021realizable} that shows how to build a finite cover of any learnable class.
\begin{lemma}[Learnable Classes are Coverable {\cite[Theorem G.9]{hopkins2021realizable}}]\label[lemma]{app-lem:covering}
For any $\varepsilon,\delta>0$ and PAC-learnable class $(\mathscr{P},X,H)$, there is an algorithm using $n_{\text{PAC}}(\varepsilon',\delta')$ samples that constructs an $\varepsilon$-cover of $H$ of size $\Pi_H(n_{\text{PAC}}(\varepsilon',\delta'))$ with probability at least $1-\delta$ where $\varepsilon'=O(\varepsilon)$ and $\delta'=O(\frac{\delta}{\Pi_H(n_{\text{PAC}}(\varepsilon/2,1/2))})$.
\end{lemma}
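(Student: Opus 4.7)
The plan is to extend the non-uniform cover construction already used in the proof of \Cref{obs:WTV}: draw an unlabeled sample $T$ of size $n = n_{\text{PAC}}(\varepsilon', \delta')$, run the PAC-learner $\mathcal{L}$ across every possible labeling of $T$, and set
\[
C_T \coloneqq \{\mathcal{L}(T, h(T)) : h \in H\}.
\]
Since $\mathcal{L}(T, h(T))$ depends on $h$ only through the labeling $h(T)$, we have $|C_T| \leq \Pi_H(n)$, matching the claimed cover size. What remains is to argue that, with probability at least $1-\delta$, $C_T$ is genuinely an $\varepsilon$-cover under $D$ for every $D \in \mathscr{P}$.

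The PAC guarantee immediately yields a \emph{non-uniform} cover: for each fixed $h \in H$, $\mathcal{L}(T,h(T))$ is $\varepsilon'$-close to $h$ with probability at least $1-\delta'$. Promoting this pointwise guarantee to a uniform statement over all of $H$ simultaneously is the main challenge, since $H$ may be infinite and a naive union bound over $h \in H$ fails.

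The approach is to union-bound against a finite \emph{ghost cover} $C_0$ instead. I would first show that every $D \in \mathscr{P}$ admits an $\varepsilon/2$-cover of size at most $K \coloneqq \Pi_H(n_{\text{PAC}}(\varepsilon/2, 1/2))$: applying the non-uniform cover property at parameters $(\varepsilon/2, 1/2)$, every $h$ is covered with probability at least $1/2$ over a fresh sample $T_0$ of size $n_{\text{PAC}}(\varepsilon/2, 1/2)$, and an averaging / measurable-selection argument (Markov applied to the mass of uncovered hypotheses under a suitable probability measure on $H$) yields a \emph{specific} realization $T_0$ whose $C_{T_0}$ covers $H$, with $|C_{T_0}| \leq K$. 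Now, setting $\delta' = \Theta(\delta/K)$ in the main sample $T$ and union-bounding over this $C_0$, with probability at least $1-\delta$ every $c \in C_0$ satisfies $d_D(c, \mathcal{L}(T, c(T))) \leq \varepsilon/2$. Triangle inequality then closes the loop: any $h \in H$ is $\varepsilon/2$-close to some $c \in C_0$, which is in turn $\varepsilon/2$-close to $\mathcal{L}(T, c(T)) \in C_T$, so $h$ is $\varepsilon$-close to an element of $C_T$.

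The main obstacle is the ghost-cover existence step: converting the per-$h$ non-uniform guarantee into the existence of a fixed $C_0$ covering \emph{all} of $H$ (rather than almost all of it) can be subtle when $H$ is uncountable, and typically requires a measurability or compactness hypothesis on $H$. Under the standard measurability assumptions used throughout the PAC literature, however, the averaging reduces cleanly to Markov's inequality and the remainder is bookkeeping on the parameters $\varepsilon', \delta'$.
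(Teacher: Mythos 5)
Your high-level plan---build $C_T$ from all labelings, union-bound against a finite ghost cover $C_0$, and close with the triangle inequality---is the right skeleton, and the parameter bookkeeping is on track. The gap is in the ghost-cover existence step. Markov's inequality applied to the expected $\mu$-mass of uncovered hypotheses gives you a specific realization $T_0$ for which $\mu(\{h : h \text{ uncovered by } C_{T_0}\}) \leq 1/2$; it does \emph{not} give $\mu$-mass zero, and it certainly does not give that every $h \in H$ is covered by $C_{T_0}$. No measurability or compactness hypothesis repairs this: a pointwise probability-$1/2$ guarantee cannot be averaged into a zero-exceptions guarantee, and iterating Markov only covers a measure-one subset of $H$, never all of it. As written, the proposal does not establish the existence of the finite $\varepsilon/2$-cover $C_0$ that the union bound hinges on.

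The standard fix is Benedek and Itai's packing argument \cite{benedek1991learnability}, which is also how one shows PAC-learnability implies UBME, so it introduces nothing foreign. Let $P \subseteq H$ be any $\varepsilon/2$-packing under $d_D$ (pairwise distances exceed $\varepsilon/2$). For each $h \in P$, over a sample $T_0$ of size $n_0 = n_{\text{PAC}}(\varepsilon/4, 1/2)$ we have $d_D(\mathcal{L}(T_0, h(T_0)), h) \leq \varepsilon/4$ with probability $> 1/2$, so by linearity of expectation the expected number of packing elements ``hit'' by the learner exceeds $|P|/2$. But for any fixed $T_0$ the learner can emit at most $\Pi_H(n_0)$ distinct hypotheses, and since $P$ is $\varepsilon/2$-separated each output is $\varepsilon/4$-close to at most one element of $P$; hence at most $\Pi_H(n_0)$ packing elements are hit, for any $T_0$. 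Comparing expectations gives $|P| < 2\Pi_H(n_0)$. Taking $C_0$ to be a \emph{maximal} $\varepsilon/2$-packing---which is automatically an $\varepsilon/2$-cover---yields the finite $C_0$ you need, after which your union bound over $C_0$ at confidence $\delta' = \Theta(\delta/|C_0|)$ and the triangle inequality go through unchanged. (The constants differ slightly from what you wrote, as the packing bound naturally involves $n_{\text{PAC}}(\varepsilon/4,1/2)$ rather than $n_{\text{PAC}}(\varepsilon/2,1/2)$, but this is routine bookkeeping.)
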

Note that this algorithm does \textit{not} produce a corresponding covering map (indeed this would imply necessity of density estimation, which we refuted in \Cref{sec:stv}). In this case, we will use the uniform estimator to build a corresponding covering map, from which it is then easy to deduce a density estimate.
\begin{proof}[Proof of \Cref{app-thm:TV-Estimation}: UE $\implies$ DE+UBME]
First, observe that any $(\varepsilon/2,\delta)$-uniform estimator immediately implies a PAC-learner by taking any minimizer of the output estimator $\mathcal{E}_S(\cdot)$ on $H$. By \Cref{app-lem:covering}, this means we can build an $(\varepsilon/16)$-cover $C$ of size  $\Pi_H(n_{\text{UE}}(\varepsilon',\delta'))$ with probability at least $1-\delta/3$ using $n_{\text{UE}}(\varepsilon',\delta')$ labeled samples.


We now use the uniform estimator to build $C$'s associated covering map. In particular, draw an unlabeled sample $T$ of size $n_{\text{UE}}(\frac{\varepsilon}{16},\frac{\delta}{3|C|})$ from the marginal distribution $D$, and consider the family of estimators obtained by looking at all labelings of $T$ across our cover $C$:
\[
\{\mathcal{E}_h\}_{h \in C}, \quad \mathcal{E}_h \coloneqq \mathcal{E}_{(T,h(T))}.
\]
We define our covering map by sending any $h \in H$ to its closest element in $C$ by these estimates:
\[
c(h) \coloneqq \text{argmin}_{h' \in C} \ \mathcal{E}_{h'}(h),
\]
breaking ties arbitrarily. Notice that as long as no run of the estimator fails (i.e.\ each $\mathcal{E}_{h'}$ has valid estimates within $\varepsilon/16$) and $C$ is indeed an $\varepsilon/16$-cover, then for every $h \in H$:
\begin{enumerate}
    \item There exists $h' \in C$ such that
    \[
    \mathcal{E}_{h'}(h) \leq \frac{\varepsilon}{8},
    \]
    \item Every $h' \in C$ such that\ $\mathcal{E}_{h'}(h) \leq \frac{\varepsilon}{8}$ satisfies
    \[
    d_D(h',h) \leq \frac{3\varepsilon}{16}.
    \]
\end{enumerate}
Thus $(C,c)$ is an $\frac{3\varepsilon}{16}$-covering-map pair. Further, these conditions hold except with probability $2\delta/3$.

Now that we have identified a covering-map pair, we can empirically estimate distances on $C$ and extend to $H$ via $c$. Draw a sample $T'$ of $O(\frac{\log|C|+\log(1/\delta)}{\varepsilon^2})$ fresh unlabeled examples to directly estimate the distance between all hypotheses in $C$. By Chernoff and Union bounds, we have that all empirical distance estimates on $C$ are accurate up to $\varepsilon/8$ with probability at least $1-\delta/3$.
We extend these estimates to all of $H$ via the covering map similar to our approach in \Cref{app-obs:WTV}:
\[
\tilde{d}_{T'}(h,h') \coloneqq \hat{d}_{T'}(c(h),c(h')).
\]
Since $(C,c)$ is a $\frac{3\varepsilon}{16}$-covering-map pair and distances in $C$ are well-estimated: 
\[
\forall h,h' \in H: |\tilde{d}_{T'}(h,h')-d_D(h,h')| \leq \varepsilon/2.
\]
Finally, output any $D' \in \mathscr{P}$ approximately satisfying the estimates for all pairs $h,h' \in H$:
\[
|d_{D'}(h,h') - \tilde{d}_{T'}(h,h')| \leq \varepsilon/2.
\]
Such a distribution is guaranteed to exist by validity of the estimates, and satisfies $TV_{H \Delta H}(D,D') \leq \varepsilon$ by construction. Union bounding over all events, the algorithm succeeds with probability at least $1-\delta$ which completes the proof.
\end{proof}

We note that (as stated in \Cref{intro:thm-est}) it is possible to give an equivalence between intermediate density estimation and a corresponding relaxed form of uniform estimation through exactly the same argument (we omit the details). This is likely not the case for weak density estimation, where the situation is more subtle. In particular the natural notion of weak uniform estimation (where the error of most hypotheses must be well-estimated, but the learner does not know which ones) is essentially trivial for any class by Chernoff. On the other hand, weak density estimation asks for more than just a good estimate on most pairs (which would follow similarly), but rather for the existence of a large subclass where \textit{all} estimates are good. We leave the characterization of classes satisfying this latter property as a problem for future work.

Since uniform estimation characterizes learnability in the distribution free setting,\footnote{Unlike the general setting, uniform estimation and uniform convergence are equivalent in the distribution-free case.} \Cref{app-thm:TV-Estimation} immediately implies that density estimation is an equivalent characterization of the traditional PAC model.
\begin{corollary}\label[corollary]{app-cor:d-free}
In the distribution-free setting, $(X,H)$ is PAC-learnable if and only if it has UBME and a density estimator.
\end{corollary}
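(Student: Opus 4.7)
The plan is to obtain this corollary essentially for free from Theorem~\ref{thm:TV-Estimation} combined with the classical VC-theoretic characterization of distribution-free PAC-learning. The key observation is that in the distribution-free setting, PAC-learnability, finite VC dimension, uniform convergence, uniform estimation, and UBME all coincide (this is the content of the fundamental theorem of PAC-learning together with Haussler's packing bounds). Thus both the ETV-learnability and the UBME conditions on the right-hand side can be read off from a single combinatorial parameter.

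For the forward direction, suppose $(X,H)$ is PAC-learnable against all distributions. By the fundamental theorem of statistical learning, $H$ has finite VC dimension $d$. Finite VC dimension immediately gives (i) uniform convergence, hence uniform estimation with $n_{\text{UE}}(\varepsilon,\delta) = O\!\left(\frac{d+\log(1/\delta)}{\varepsilon^2}\right)$ (take $\mathcal{E}_S = \text{err}_S$), and (ii) UBME with $m_{\text{UB}}(\varepsilon) = (1/\varepsilon)^{O(d)}$ via Haussler's packing bound. Invoking the UE $\Rightarrow$ ETV direction of Theorem~\ref{thm:TV-Estimation} then produces an ETV-learner, so both conditions hold.

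For the backward direction, suppose $(X,H)$ is ETV-learnable and has UBME. The ETV+UBME $\Rightarrow$ UE direction of Theorem~\ref{thm:TV-Estimation} yields a uniform estimator. A uniform estimator immediately gives a PAC-learner: draw $n_{\text{UE}}(\varepsilon/2,\delta)$ labeled samples, compute $\mathcal{E}_S$, and output any hypothesis $\hat h \in \argmin_{h' \in H} \mathcal{E}_S(h')$. With probability at least $1-\delta$, all estimates are within $\varepsilon/2$ of the true error; since the ground truth has true error $0$ and hence estimated error at most $\varepsilon/2$, the minimizer $\hat h$ has estimated error at most $\varepsilon/2$ and therefore true error at most $\varepsilon$.

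There is essentially no hard step here: the only content beyond citing Theorem~\ref{thm:TV-Estimation} is remembering that in the distribution-free case, UBME is a consequence of PAC-learnability rather than an independent hypothesis, which is exactly why the corollary can state ETV-learnability and UBME together as equivalent to PAC-learnability. If any step is mildly subtle it is checking that the quantitative sample bounds in Theorem~\ref{thm:TV-Estimation} remain finite when specialized to $\mathscr P =$ all distributions, but this is automatic from the polynomial-in-$d$ bounds on $n_{\text{UE}}$, $\Pi_H$, and $m(\varepsilon)$ for VC classes.
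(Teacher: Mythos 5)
Your proof is correct, and the forward direction matches the paper's (PAC $\Rightarrow$ uniform convergence $\Rightarrow$ uniform estimation $\Rightarrow$ ETV via Theorem~\ref{thm:TV-Estimation}, with UBME coming along automatically from finite VC dimension). Where you diverge is the backward direction: the paper closes the loop by observing that Exact TV-learning is a special case of Strong TV-learning and then invokes Proposition~\ref{obs:STV} (STV $+$ UBME $\Rightarrow$ PAC), whereas you instead use the ``ETV $+$ UBME $\Rightarrow$ UE'' half of Theorem~\ref{thm:TV-Estimation} and then note that a uniform estimator gives a PAC-learner by estimator-risk minimization. Both routes are valid and roughly equally short; the paper's choice avoids re-deriving the ``UE $\Rightarrow$ PAC'' step (which does appear buried inside the proof of Theorem~\ref{thm:TV-Estimation} but is not stated as a standalone fact), while your choice has the mild aesthetic advantage of deriving the entire corollary as a specialization of Theorem~\ref{thm:TV-Estimation} alone, without pulling in the separate covering argument behind Proposition~\ref{obs:STV}. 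Your sanity check that the quantitative bounds ($\Pi_H$, $m(\varepsilon)$) stay finite for VC classes is a nice touch but not strictly needed, since the corollary is purely qualitative.
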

\begin{proof}
If $(X,H)$ is PAC-learnable, then it satisfies uniform convergence \citep{vapnik1974theory,Blumer} which implies the existence of a density estimator by \Cref{app-thm:TV-Estimation}. If $(X,H)$ has a density estimator and has UBME, then it is PAC-learnable by \Cref{app-obs:STV}.
\end{proof}
We remark that in the distribution-free case, one does not have to factor through uniform estimation, and could instead use a more classical approach by arguing $H \Delta H$ has finite VC and therefore satisfies fast convergence of empirical densities in $H\Delta H$-distance (see e.g.\ \cite[Theorem 4.9]{anthony1999neural}). This approach fails in our setting since $H \Delta H$ need not have finite VC.

\subsection{Uniform Convergence and UBME}
We now discuss the relation between density estimation and prior conditions considered in the distribution-family model: uniform convergence (for sufficiency) and uniformly bounded metric entropy (for necessity). First, we show that density estimation strictly contains uniform convergence.
\begin{proposition}
In the distribution-family model:
\[
\text{Uniform Convergence $\subsetneq$ DE + UBME}.
\]
\end{proposition}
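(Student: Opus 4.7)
The plan is to prove both the containment and its strictness.

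\textbf{Containment.} Suppose $(\mathscr{P},X,H)$ satisfies uniform convergence. The empirical error is itself a uniform estimator, so by the forward direction of \Cref{thm:TV-Estimation} the class is ETV-learnable. For UBME, one may either appeal to Benedek--Itai (UC implies PAC-learnability, which implies UBME), or argue directly: UC (under mild measurability) implies finite VC dimension $d$, and a standard Sauer--Shelah / $\varepsilon$-net argument shows that the projection of $H$ onto a suitably sized sample from any $D \in \mathscr{P}$ is an $\varepsilon$-cover of $H$ under $d_D$ of size at most $(c/\varepsilon)^{O(d)}$, uniformly in $D$. Either route gives the containment.

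\textbf{Strictness.} I exhibit a class satisfying ETV-learning and UBME but failing uniform convergence. Let $X = [0,1]$, let $\mathscr{P} = \{D\}$ where $D$ is uniform on $[0,1]$, and let $H$ consist of the constant-$0$ function together with the indicators of all nonempty finite subsets of $[0,1]$. Since $\mathscr{P}$ is a singleton, the ETV-learner trivially outputs $D$ using zero samples. Every nonzero $h \in H$ is supported on a finite set and hence has $D$-measure zero, so the constant-$0$ function is a $0$-cover and $m(\varepsilon) = 1$ for every $\varepsilon > 0$, giving UBME. However, uniform convergence fails: for any sample $S = \{(x_1,0),\ldots,(x_n,0)\}$ drawn from $D$ with ground truth $h^* \equiv 0$, the hypothesis $h \in H$ indicating the set $\{x_1,\ldots,x_n\}$ has empirical error $1$ but true error $0$, so $|\text{err}_S(h) - \text{err}_{D \times h^*}(h)|$ cannot be made small uniformly over $H$.

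\textbf{Main obstacle.} There is no serious technical difficulty. The only care required is in the UC $\Rightarrow$ UBME step, where one invokes either finite VC dimension (under measurability) or the Benedek--Itai necessity of UBME for PAC-learnability. The ETV direction is immediate from \Cref{thm:TV-Estimation}, and the separating example exploits the classical observation that a class of measure-zero hypotheses is trivially coverable and trivially learnable, and (in a single-distribution family) trivially ETV-learnable, yet can have infinite VC dimension and thus fail UC.
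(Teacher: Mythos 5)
Your proposal is correct and follows essentially the same route as the paper: containment via the uniform-estimation characterization of ETV $+$ UBME (\Cref{thm:TV-Estimation}), and strictness via the Benedek--Itai singleton-family example with finite-support indicators, which is a cosmetic $0/1$ flip of the paper's construction (they use the all-ones function as ground truth; you use constant zero). One small caution on your containment step: of your two suggested routes to UBME, only the Benedek--Itai one (UC $\Rightarrow$ PAC $\Rightarrow$ UBME) is valid in general; the route ``UC implies finite VC dimension'' does not hold in the distribution-family model, where UC over a restricted $\mathscr{P}$ can coexist with infinite VC dimension (e.g., a single point-mass distribution with $H$ the full power set of $\mathbb{N}$). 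Since you offer the Benedek--Itai argument as an alternative, the proof stands.
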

\begin{proof}
Containment is immediate from the characterization of DE + UBME as uniform estimation (\Cref{app-thm:TV-Estimation}). Strictness follows from \cite{benedek1991learnability}'s example of a learnable class that fails uniform convergence. Let $D$ be the uniform distribution over $[0,1]$, and let $H$ consist of all sets of finite support and the all $1$'s function. It is easy to see uniform convergence fails, since choosing $h=\vec{1}$, for any finite sample $T$ there is always a completely consistent hypothesis with error $1$, namely the indicator $\mathbf{1}_T$. On the other hand, since $\mathscr{P}$ is just a single distribution, density estimation is trivial.
\end{proof}

Finally, we discuss the connection between weak density estimation and UBME. 
\begin{proposition}\label[proposition]{app-prop:UBME}
Any class $(\mathscr P, X, H)$ with metric entropy $m(\varepsilon)$ has a weak density estimator on
\[
n_{\text{WDE}}(\varepsilon,\delta) \leq O\left(\frac{\log(m(\varepsilon/8)) + \log(1/\delta)}{\varepsilon^2}\right)
\]
samples.
\end{proposition}
\begin{proof}
We argue it is sufficient to simply output the empirical distance estimator on an unlabeled sample $T$ of size $O(\frac{\log(m(\varepsilon/8))+\log(1/\delta))}{\varepsilon^2})$. Fix any $D \in \mathscr{P}$, and let $(C,c)$ denote an $(\varepsilon/8)$-covering-map pair promised by UBME. Note that $(C,c)$ is not known to the learner, and is used only in the analysis.

By Chernoff and Union bounds, we have with probability at least $1-\delta$ that the empirical distance estimator $\hat{d}_T$ is within $\varepsilon/8$ of its true value for all pairs in $C$. Now consider the set $G_{T,D} \subset H$ of hypotheses whose distance to their cover representative is well estimated:
\[
G_{T,D} \coloneqq \left\{h \in H: |\hat{d}_{T}(h,c(h)) - d_D(h,c(h))| \leq \varepsilon/8 \right\}.
\]
 For any two elements $h,h' \in G_{T,D}$, we have
\begin{enumerate}
    \item The distances $d_D(h,h')$ and $d_D(c(h),c(h'))$ are close:
    \[
    |d_D(h,h') - d_D(c(h),c(h'))| \leq \varepsilon/4
    \]
    \item The empirical distances $\hat{d}_T(h,h')$ and $\hat{d}_T(c(h),c(h'))$ are close
        \[
    |\hat{d}_T(h,h') - \hat{d}_T(c(h),c(h'))| \leq \varepsilon/2
    \]
\end{enumerate}
since $h \Delta h'$ and $c(h) \Delta c(h')$ can only differ on $h \Delta c(h) \cup h' \Delta c(h')$, both of which have true measure at most $\varepsilon/8$ and empirical measure at most $\varepsilon/4$ by assumption. Thus as long as the empirical distance estimates on $C$ are indeed accurate up to $\varepsilon/8$, we have
\begin{align*}
|\hat{d}_T(h,h') - d_D(h,h')| &\leq |\hat{d}_T(c(h),c(h')) - d_D(c(h),c(h'))| + 3\varepsilon/4\\
&\leq \varepsilon
\end{align*}
as desired. Since any fixed $h\in H$ lies in $G_{T,D}$ with probability at least $1-\delta$ by a Chernoff bound and distances in $C$ are also well estimated with probability at least $1-\delta$, this gives the desired estimator.
\end{proof}
Note that if one is not interested in relative sample complexities, this subsumes \Cref{app-obs:WTV} since any PAC-learnable class has UBME \citep{benedek1991learnability}. \Cref{app-prop:UBME} also implies that the counter-example of \cite{dudley1994metric} shows weak estimation fails sufficiency as well. However, our construction in the previous section gives a stronger separation since the above approach is inherently lossy and cannot handle $\varepsilon=0$.

\acks{We thank Shay Moran for many interesting discussions surrounding the topic of this work and anonymous reviewers for pointing out connections with density estimation and suggesting improved naming conventions.}
\bibliography{references} 
\end{document}